\theoremstyle{plain}
\newtheorem{theorem}{Theorem}[section]
\newtheorem{lemma}[theorem]{Lemma}
\newtheorem{corollary}[theorem]{Corollary}
\theoremstyle{definition}
\newtheorem{assumption}[theorem]{Assumption}
\theoremstyle{remark}
\newtheorem{remark}[theorem]{Remark}
\newcommand {\beq}{\begin{equation}}
\newcommand {\eeq}{\end{equation}}
\newcommand {\beqn}{\begin{equation*}}
\newcommand {\eeqn}{\end{equation*}}
\newcommand {\bear}{\begin{eqnarray}}
\newcommand {\eear}{\end{eqnarray}}
\newcommand {\bearn}{\begin{eqnarray*}}
\newcommand {\eearn}{\end{eqnarray*}}
\newcommand{\bE}{\mathbb{E}}
\newcommand{\bP}{\mathbb{P}}
\newcommand{\bX}{\mathbb{X}}
\newcommand{\bR}{\mathbb{R}}
\newcommand{\bZ}{\mathbb{W}}
\newcommand{\bW}{\mathbb{W}}
\newcommand{\bH}{\mathbb{H}}
\newcommand{\cA}{\mathcal{A}}
\newcommand{\cB}{\mathcal{B}}
\newcommand{\cF}{\mathcal{F}}
\newcommand{\cR}{\mathcal{R}}
\newcommand{\cH}{\mathcal{H}}
\newcommand{\bfX}{\mathbf{X}}
\newcommand{\bfY}{\mathbf{Y}}
\newcommand{\bfx}{\mathbf{x}}
\newcommand{\bfy}{\mathbf{y}}
\newcommand{\bfw}{\mathbf{w}}
\newcommand{\bfz}{\mathbf{w}}
\DeclareMathOperator*{\argmin}{arg\,min}
\newcommand{\new}[1]{\textcolor{black}{#1}}
\title{Statistical Learning and Inverse Problems: \\ A Stochastic Gradient  Approach}
\author{%
  Yuri S.~Fonseca\thanks{
  Decision, Risk and Operations, Columbia University, New York, NY 10027, \texttt{yfonseca23@gsb.columbia.edu}},
  Yuri F.~Saporito\thanks{School of Aplied Mathematics, Getulio Vargas Foundation, Rio de Janeiro, RJ, \texttt{yuri.saporito@fgv.br}}
}
\begin{document}

\maketitle

\begin{abstract}

Inverse problems are paramount in Science and Engineering. In this paper, we consider the setup of Statistical Inverse Problem (SIP) and demonstrate how Stochastic Gradient Descent (SGD) algorithms can be used in the linear SIP setting. We provide consistency and finite sample bounds for the excess risk. We also propose a modification for the SGD algorithm where we leverage machine learning methods to smooth the stochastic gradients and improve empirical performance. We exemplify the algorithm in a setting of great interest nowadays: the Functional Linear Regression model. In this case we consider a synthetic data example and examples with a real data classification problem. 
\end{abstract}

\section{Introduction}

Inverse Problems (IP) might be described as the search of an unknown parameter (that could be a function) that satisfies a given, known equation. Considering the notation: 
$$y = A[f] + \mbox{ noise},$$
where $f$ and $y$ are elements of given Hilbert spaces, we would like to compute (or estimate) $f$ given the data $y$ for some level of noise. Typically, IPs are ill-posed in the sense that the solution does not depend continuously on the data. There are several very important and impressive examples of IPs in our daily lives. Medical imaging has been using IPs for decades and it has shaped the area, as for instance, Computerized Tomography (CT) and Magnetic Resonance Imaging (MRI). For an introductory text, see \cite{vogel2002computational}.

A vast literature of IPs is devoted to deterministic problems where the noise term is also a element of a Hilbert space and commonly assumed small in norm, which is not usually verified in practice. In this work, we will take a different avenue, known as Statistical Inverse Problems (SIP). This approach is a formalization of IPs within a probabilistic setting, where the uncertainty of all measurements are properly considered. Our focus in this work is to propose a direct and practical method for solving SIP problems and, at the same time, provide theoretical guarantees for the excess risk performance of the algorithm we develop. Our algorithm is based on a gradient descent framework, where stochastic gradients (or base learners that approximate the stochastic gradients) are used to estimate general functional parameters. 

The paper is organized as follows. We finish this section contextualizing our paper in the broad literature and stating our main contributions. In \Cref{section:formulation}, we formally introduce the learning problem that we analyze. In \Cref{section:motivation}, we provide examples of practical problems that fits within our formulation. In \Cref{section:results} we provide our main results and algorithms. Finally, in \Cref{section:numerical} we provide numerical examples and a real data application for a Functional Linear Regression problem (FLR). Due to space constraints, some of the figures, proofs and experiments were moved to the supplementary material.

\subsection{Contribution} 

We provide a novel numerical method to estimate functional parameters in SIP problems using stochastic gradients. More precisely, we extend the properties and flexibility of SGD and boosting algorithms to a broader class of problems by bridging the gap between the IP and machine learning communities.

Whereas most of the IPs methods focus on regularization strategies to ``invert" the operator $A$, we propose a gradient descent type of algorithm to estimate the functional parameter directly. Our algorithm works in the same spirit as Stochastic Gradient Descent algorithms with sample averaging.
While results of SGD are well understood in the context of regression problems in finite and infinite dimensions and SGD is well understood in deterministic IPs, SGD have not yet been considered under the SIP formulation.

We show that our procedure also ensures risk consistency in expectation and high probability under the statistical setting. Furthermore, we propose a modification in our algorithm to substitute the stochastic gradients by base learners similarly to boosting algorithms, \cite{mason1999boosting, friedman2001greedy}. This modification improve a common challenge faced by SIP problems: the discretization procedures of the operator $A$ that arises in SIP. 


\subsection{Literature Review}

Historically, SIP was first introduced in \cite{firstSIP} where IPs from Mathematical Physics were recast into a statistical framework. For a more structured introduction, we forward the reader to \cite{bookSIP}. Several advances were made in the parametric approach to SIP, where the unknown function is assumed to be completely described by an unknown parameter living in a finite dimensional space, see for instance \cite{evans2002inverse}. In our paper, however, we will consider the nonparametric framework as described in \cite{cavalier2008nonparametric}. In this setting, we see the IP as a search of an element of an infinite dimensional space. 

When considering IPs (and SIP, in particular), there are several ways to regularize the problem in order to deal with its ill-posedness. For instance, one could consider roughness penalty or a functional basis as in \cite{tenorio2001statistical}. Additionally, one could examine Tikhonov and spectral cut-off regularizations as in \cite{bissantz2007convergence}. For many of those standard approaches, consistency under the SIP setting and rates of convergences were established. See for instance \cite{bissantz2004consistency, bissantz2008statistical}. A thoroughly discussion of stochastic gradient algorithms is outside the scope of this work and we refer the reader to \cite{zinkevich2003online,nesterov2018lectures} and references therein.

\new{There have been numerous applications of Machine Learning (and Deep Learning, in particular) to solve IPs, in recent years. However, in our opinion, these applications are akin of the capabilities that these new techniques could bring to this area of research.} Some attention have been given to imaging problems as in \cite{jin2017deep} and \cite{ongie2020deep}. Under deterministic IP, the paper \cite{li2020nett} studies the regularization and convergence rates of penalized neural networks when solving regression problems. See also \cite{adler2017solving} and \cite{bai2020deep}. Other important references regarding SGD for deterministic IP are \cite{jin2021saturation, tang2019limitation, jin2020convergence}.

The main examples we bring in our paper is the class of Functional Linear Regression (FLR). This problem has drawn the attention of the statistical, econometric and computer science communities in the past decade, see \cite{cai2006prediction, yao2005functional,hall2007methodology}. The usual methodology applied to this problem is the well-known FDA. For example, one could consider a prespecified \new{functional} basis to regularize the regression problem \cite{goldsmith2011penalized} or one could use the Functional Principal Component (FPC) basis, \cite{morris2015functional}. More recently, methods inspired in machine learning for standard linear regression problems were also extended to the FLR setting, see for instance \cite{james2009functional, fan2015functional} for methods that are suitable for high dimensional covariates or interpretable in the LASSO sense. In this work we show how our modification to the SGD algorithm can be seen as an averaging of boosting estimators and can also be used to estimate FLR models in the high-dimensional setting. 

\section{Problem Formulation}\label{section:formulation}

We start by fixing a probability space $(\Omega, \cA, \bP)$ and a vector space $\bX$ of inputs. We denote the random input by $\bfX \in L^2(\Omega, \cA, \bP)$ taking values in $\bX$ and consider the space \new{\footnotemark $L^2(\bX, \mathcal{B}(\bX), \mu_{\bfX})$, henceforth referred to as $L^2(\bX)$}, of functions $g : \bX \longrightarrow \bR^d$ with inner product $\langle g_1, g_2 \rangle_{L^2(\bX)}  = \bE[\langle g_1(\bfX), g_2(\bfX) \rangle]$ and norm $\|g\|^2_{L^2(\bX)} = \bE[\|g(\bfX)\|^2] < +\infty$, where $\langle \cdot, \cdot \rangle$ and $\|\cdot\|$ are the inner product and norm of $\bR^d$. 

\footnotetext{\new{We denote by $\mathcal{B}(\bX)$ the Borel sigma algebra in $\bX$ and by $\mu_{\bfX}$ the distribution of the random variable $\bfX$.}}

\new{We also consider a Hilbert space $\bH$ with inner product $\langle \cdot, \cdot \rangle_{\bH}$. Finally, we consider an operator $A : \bH \longrightarrow L^2(\bX)$. This operator defines a direct problem and we assume that it is known. Given $f \in \bH$,} we use the notation $A[f] \in L^2(\bX)$, i.e. $A[f]$ is a square-integrable function $A[f] : \bX \longrightarrow \bR^d$. 

We are interested in solving the statistical inverse problem related to $A$: jointly to observing samples of $\bfX$ taking values in $\bX$, we observe noisy samples of $A[f^\circ](\bfX)$, for some fixed, unknown $f^\circ \in \bH$, which we denote by $\bfY$:
\begin{align}\label{eq:dgp}
\bfY = A[f^\circ](\bfX) + \epsilon,
\end{align}
where $\epsilon$ is a zero-mean random noise. The problem we will pore over in this paper is the estimation of $f^\circ$ based on this given sample.

Let $\ell:\mathbb R^d \times \mathbb R^d \rightarrow \mathbb R_+$ be a point-to-point loss function as, for example, the squared loss $\ell(\bfy,\bfy') = \frac{1}{2}\|\bfy - \bfy'\|^2$\new{, for regression, or the logistic loss function $\ell(\bfy,\bfy') = \log(1 + e^{-\bfy \cdot \bfy'})$, for classification.} We define the populational risk as:
\begin{align*}
\cR_A(f) \triangleq \bE[\ell(\bfY,A[f](\bfX))],
\end{align*} 
and we would like to solve:
\begin{align}\label{eq:learning_problem}
\inf_{f \in \mathcal F} \cR_A(f),
\end{align}
where \new{$\mathcal F \subset \bH$} with $f^\circ \in \mathcal F$. We will denote by $\partial_2$ the partial derivative with respect to the second argument. 

Given a sample, we will study how to control the excess risk of a functional estimator $\hat f$ of $f^\circ$:
\begin{align}\label{eq:excess_risk}
\cR_A(\hat f) - \inf_{f \in \mathcal F} \cR_A(f).
\end{align}

Instead of taking the standard route of solving the Empirical Risk Minimization problem and later establishing results for \eqref{eq:excess_risk}, in Section \ref{section:results} we show how our algorithms allow us to tackle \eqref{eq:excess_risk} directly by constructing stochastic gradients directly for the populational risk.

\section{Examples: motivation}\label{section:motivation}

Before we formalize our results, we first motivate the study of Eq. \eqref{eq:dgp} with a few of applications. Each of those problems have a myriad of solutions on their own. For more information on those IPs, see, for instance, \cite{vogel2002computational}. 

\textbf{Deconvolution}. This type of inverse problems relate the values of $\bfY$ and $\bfX$ through the following convolution equation:
$$\bfY = \int_{\bZ} k(\bfX - \bfz)f(\bfz)d\mu(\bfz) + \epsilon,$$
where \new{$\bX = \bW = \bR^d$, $\bH = L^2(\bZ, \cB, \mu)$} and the kernel $k: \bR^d \longrightarrow \bR$ is known. In this case, we define the operator $A$ as:
$$A[f](\bfx) =  \int_{\bZ} k(\bfx - \bfz)f(\bfz) d\mu(\bfz).$$

\textbf{Functional Linear Regression}. Consider the scalar, multivariate functional linear regression: \new{let $\bX = D([0,T])$ (the space of right-continuous with left limits functions taking values in $\bR^{d \times k}$ with the sup norm) so that $L^2(\Omega)$ is the space of stochastic processes with sample paths in $D([0,T])$ and norm
\begin{align}\label{eq:norm_sup}
\|\bfX\|^2 = \bE\left[ \sup_{t \in [0,T]} \|\bfX(t)\|^2 \right].
\end{align}
}
\new{Moreover, $\bH = L^2([0,T])$ taking values in $\bR^k$ and $\bfY$ is given by the following model
$$\bfY = \int_0^T \bfX(s) f(s) ds + \epsilon,$$
where $f \in \bH$. Here we changed the notation from $\bfz$ to $s$ in order to keep the classical notation from FLR. In this case,  
$$A[f](\bfx) = \int_0^T \bfx(s) f(s) ds.$$}
The model can be easily extended to deal with $\bfY$ taking label values such as in a classification problem as we will show in the numerical studies.

Due to space constraints, we provide examples in the FLR setting. In the supplementary material, we demonstrate how the algorithms presented in \Cref{sec:algorithms} \new{can also be applied} in deconvolution problems.






\section{Theoretical Results and Algorithms}\label{section:results}

In this paper, we consider the following set of assumptions.

\begin{assumption}\label{assumption}

\begin{enumerate}
    \item[]
    \item $A: \bH \longrightarrow L^2(\bX)$ is a linear, bounded operator;
    \item $\ell$ is a convex and $C^2$ function in its second argument;
    \item There exists $\theta_0 > 0$ such that, for all $f, g \in \bH$,
    \new{$$\displaystyle \sup_{|\theta| \leq \theta_0} \bE\bigg[
        \Bigl\langle
        A[g](\bfX), \
        \partial_{22} \ell \big( Y,A[f](\bfX) + \theta A[g](\bfX) \big)  A[g](\bfX)
        \Bigr\rangle
    \bigg] < \infty;$$}
    \item $f^\circ \in \argmin_{f \in \mathcal F} \cR_A(f)$ and $\cR_A(f^\circ) > -\infty$;  
    \item $\sup_{f,f' \in \mathcal F}\|f-f'\|_{L^2(\bW)} = D < \infty$.
\end{enumerate}
\end{assumption}

Assumption 1 is our strongest one, since it imposes that our operator is linear and bounded. Nevertheless, linear SIPs encompass a wide class of problems of practical and theoretical interest for engineering, statistics and computer science communities among others, a few of them presented in Section \ref{section:motivation}. Moreover, the nonlinear case could be similarly studied with more cumbersome notation and assumptions. Assumption 2 is standard for gradient based algorithms and is commonly assumed in many learning problems. Assumption 3 is a mild integrability condition of the loss function commonly satisfied in many practical situations. For instance, in the squared loss case, this assumption becomes \new{$\bE[\|A[g](\bfX)\|^2]< \infty$}, which is automatically satisfied since $A[g] \in L^2(\bX)$. Assumption 4 is needed so the problem we analyze indeed \new{has} a solution. Assumption 5 is stating that the diameter of the set $\cF$ is finite.

One should notice that our set of assumptions does include the class of ill-posed (linear) inverse problems since we do not need to assume that $A$ is bijective. If that were the case, it is known that then $A$ would have a bounded inverse, and then, the IP would not be ill-posed.

In the next sections we provide our theoretical results. Instead of following the common approach of minimizing the Empirical Risk Minimization problem, we show how to compute stochastic gradients in order to control directly for the excess risk \eqref{eq:excess_risk} both in expectation and in probability.

\subsection{Preliminaries}

Our first result allows us to compute the gradient of the populational risk at a given functional parameter $f$. Before we present it, note that, by linearity, $A: \bH \rightarrow L^2(\bX)$ is differentiable and, for every $f,g \in \bH$, we have that the directional derivative of $A[f]$ in the direction $g$ is given by
$$
DA[f](g) = \lim_{\delta \rightarrow 0} \frac{1}{\delta}\left(A[f + \delta g]-A[f]\right) = A[g].
$$
Note that the directional derivative does not \new{depend} on the point $f$ that we are evaluating the gradient. Let $A^\ast$ denote the adjoint operator of $A$ defined as the linear and bounded operator $A^\ast : L^2(\bX) \longrightarrow \bH$ such that\footnote{The adjoint of a linear, bounded operator always exists.}
$$\langle A[f], h \rangle_{L^2(\bX)} = \langle f, A^\ast[h] \rangle_{\bH}, \mbox{ for all} f \in \bH \mbox{ and } h \in L^2(\bX).$$
The following lemma holds true:

\begin{lemma}\label{lemma:gradient}
Under 1, 2 and 3 of Assumption \ref{assumption} we have that $$\nabla \mathcal R_A(f) = A^\ast[\phi_f] \; \in \bH,$$ where $\phi_f(\bfx) = \bE[\partial_2 \ell(\bfY,A[f](\bfx)) \ | \ \bfX = \bfx]$.
\end{lemma}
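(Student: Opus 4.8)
The plan is to identify $\nabla \cR_A(f)$ through its defining property: it is the unique element of $\bH$ for which the Gateaux derivative of $\cR_A$ at $f$ satisfies $D\cR_A(f)(g) = \langle \nabla \cR_A(f), g\rangle_{\bH}$ for every direction $g \in \bH$. So first I would fix $f, g \in \bH$ and study the scalar map $\theta \mapsto \cR_A(f + \theta g) = \bE[\ell(\bfY, A[f](\bfX) + \theta A[g](\bfX))]$, where I have already used linearity of $A$ to write $A[f+\theta g] = A[f] + \theta A[g]$ (equivalently, $DA[f](g) = A[g]$, as recorded just before the lemma). The goal is to differentiate this at $\theta = 0$ and massage the outcome into the form $\langle A^\ast[\phi_f], g\rangle_{\bH}$.

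Assuming for the moment that differentiation may be carried inside the expectation, the chain rule together with Assumption 2 ($\ell \in C^2$ in its second argument) gives
$$\frac{d}{d\theta}\Big|_{\theta=0} \ell(\bfY, A[f](\bfX) + \theta A[g](\bfX)) = \langle \partial_2 \ell(\bfY, A[f](\bfX)), A[g](\bfX)\rangle,$$
so that $D\cR_A(f)(g) = \bE[\langle \partial_2 \ell(\bfY, A[f](\bfX)), A[g](\bfX)\rangle]$. I would then condition on $\bfX$ and invoke the tower property: since $A[g](\bfX)$ is $\sigma(\bfX)$-measurable,
$$\bE\big[\langle \partial_2 \ell(\bfY, A[f](\bfX)), A[g](\bfX)\rangle\big] = \bE\big[\langle \phi_f(\bfX), A[g](\bfX)\rangle\big] = \langle \phi_f, A[g]\rangle_{L^2(\bX)},$$
with $\phi_f(\bfx) = \bE[\partial_2\ell(\bfY, A[f](\bfx)) \mid \bfX = \bfx]$ exactly as in the statement. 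Finally, applying the definition of the adjoint $A^\ast$ yields $\langle \phi_f, A[g]\rangle_{L^2(\bX)} = \langle A^\ast[\phi_f], g\rangle_{\bH}$, and since $g$ is arbitrary the Riesz identification forces $\nabla \cR_A(f) = A^\ast[\phi_f]$. For this last step to be meaningful one must first check that $\phi_f \in L^2(\bX)$, so that $A^\ast[\phi_f]$ is defined; this follows by applying Jensen's inequality to the conditional expectation together with the integrability implicit in the hypotheses.

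The main obstacle is the single step I postponed: justifying the interchange of the derivative and the expectation, i.e.\ differentiation under the integral sign. This is precisely what Assumption 3 is designed to enable. I would form the difference quotient and apply the fundamental theorem of calculus in the second argument of $\ell$ to obtain, for $|\theta| \le \theta_0$,
$$\partial_2 \ell(\bfY, A[f](\bfX) + \theta A[g](\bfX)) = \partial_2 \ell(\bfY, A[f](\bfX)) + \int_0^\theta \partial_{22}\ell(\bfY, A[f](\bfX) + s A[g](\bfX))\, A[g](\bfX)\, ds.$$
Pairing with $A[g](\bfX)$ and taking expectations, the resulting integrand is dominated uniformly in $\theta \in [-\theta_0, \theta_0]$ by an integrable random variable, precisely because Assumption 3 bounds $\sup_{|\theta|\le\theta_0}\bE[\langle A[g](\bfX), \partial_{22}\ell(\bfY, A[f](\bfX)+\theta A[g](\bfX)) A[g](\bfX)\rangle]$ (the endpoint term at $\theta=0$ being integrable for the same reason). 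This supplies the dominating function required by the dominated convergence theorem, which legitimizes the interchange and closes the argument; the remaining steps — chain rule, tower property, and adjoint identity — are then routine.
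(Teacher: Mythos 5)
Your proposal is correct and follows essentially the same route as the paper: linearity of $A$ to reduce to a scalar perturbation, a Taylor/remainder expansion of $\ell$ in its second argument controlled by Assumption 3 to justify differentiating under the expectation, then the tower property and the adjoint identity. If anything, your integral-form remainder plus dominated convergence is slightly more explicit than the paper's Lagrange-form Taylor step, and your remark that $\phi_f \in L^2(\bX)$ must be checked is a point the paper leaves implicit.
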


\begin{proof}
Firstly, define, for fixed $(\bfX, \bfY)$,
\begin{align*}
\psi(\delta) &= \ell(\bfY, A[f + \delta g](\bfX)) = \ell(\bfY, A[f](\bfX) + \delta A[g](\bfX)).    
\end{align*}
Then, we get the directional derivative of the risk function in direction $g$ by applying the Taylor formula for $\psi$ as a function of $\delta$ around $\delta=0$:
\begin{align*}
D\cR_A(f)(g) &= \lim_{\delta \rightarrow 0} \frac{1}{\delta}\left(\cR_A(f + \delta g) - \cR_A(f)\right) \\
&\new{=  \lim_{\delta \rightarrow 0} \bE\bigg[\frac{1}{\delta}\bigg( \delta \Bigl\langle \partial_2 \ell \big( \bfY,A[f](\bfX) \big), A[g](\bfX) \Bigr\rangle} \\&\hspace{2.5em}
\new{ + \tfrac{1}{2}\delta^2 \Bigl\langle
    A[g](\bfX), \
    \partial_{22} \ell \big( \bfY,A[f](\bfX) + \theta A[g](\bfX) \big), A[g](\bfX)
\Bigr\rangle \bigg)\bigg],}
\end{align*}
where $\theta$ comes from the Taylor formula and it is between $-\theta_0$ and $\theta_0$, for some fixed $\theta_0 > 0$. Hence, by Assumption 3, we find
\new{
\begin{align*}
    D\cR_A(f)(g) &= \bE\bigg[
    \Bigl\langle
    \partial_2 \ell \big( \bfY,A[f](\bfX)), A[g](\bfX \big)
    \Bigr\rangle
\bigg].\end{align*}
}
By the definition of $\phi$ and by conditioning in $\bfX$, we find
\begin{align*}
D\cR_A(f)(g) &= \bE[\langle \phi_f(\bfX), A[g](\bfX) \rangle] = \langle \phi_f, A[g] \rangle_{L^2(\bX)} = \langle A^\ast[\phi_f], g \rangle_{\bH}.
\end{align*}
Finally, we get that the descent direction $\nabla \mathcal R_A(f)$ is given by $A^\ast[\phi_f] \; \in \bH$. 
\end{proof}




\color{black}

\subsection{Unbiased Estimator of the Gradient}

In order to define an unbiased estimator of the gradient $\nabla \cR_A$, we consider the following assumption:

\begin{assumption}\label{assumption:kernel}
\begin{enumerate}
    \item[]
    \item $\bH$ is a Hilbert space of functions from $\bW$ to $\bR^k$;
    \item There exists a kernel $\Phi: \bX \times \bW \longrightarrow \bR^{k \times d}$ such that 
$$A^\ast[h](\bfw) = \bE[\Phi(\bfX, \bfw) h(\bfX) ] \in \bH,$$
for all $\bfw \in \bW$ and $h \in L^2(\bX)$.
\end{enumerate}

\end{assumption}

Several examples, including the Functional Linear Regression, as we will verify in Section \ref{section:numerical}, satisfy this assumption,. Additionally, there are two situations that encompass many important SIPs. The first one is a restriction of the Hilbert space $\bH$ without restrictions on the operator $A$:

\begin{lemma}
Assumption \ref{assumption:kernel} is verified if $\bH$ is a Reproducing Kernel Hilbert Space (RKHS).
\end{lemma}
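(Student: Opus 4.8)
The plan is to exploit the reproducing property of the RKHS $\bH$ to evaluate the function $A^\ast[h]$ pointwise and then transport the computation to $L^2(\bX)$ through the defining relation of the adjoint. Since $\bH$ consists of $\bR^k$-valued functions, I would work within the operator-valued (matrix-valued) reproducing kernel framework, with reproducing kernel $K : \bW \times \bW \longrightarrow \bR^{k \times k}$ satisfying $K(\cdot, \bfw)c \in \bH$ for every $\bfw \in \bW$ and $c \in \bR^k$, together with the reproducing identity $\langle f, K(\cdot, \bfw)c \rangle_{\bH} = \langle f(\bfw), c \rangle_{\bR^k}$ for all $f \in \bH$.

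First I would fix $\bfw \in \bW$ and $c \in \bR^k$ and apply the reproducing property to the element $A^\ast[h] \in \bH$, giving $\langle A^\ast[h](\bfw), c \rangle_{\bR^k} = \langle A^\ast[h], K(\cdot, \bfw)c \rangle_{\bH}$. Using the definition of the adjoint, the right-hand side equals $\langle h, A[K(\cdot, \bfw)c] \rangle_{L^2(\bX)}$, which by definition of the $L^2(\bX)$ inner product is $\bE[\langle h(\bfX), A[K(\cdot, \bfw)c](\bfX) \rangle_{\bR^d}]$. The finiteness of this expectation follows from Cauchy--Schwarz, since both $h$ and $A[K(\cdot, \bfw)c]$ lie in $L^2(\bX)$, the latter because $A$ is bounded by Assumption 1.

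Next I would use the linearity of $A$ to represent the map $c \mapsto A[K(\cdot, \bfw)c](\bfx)$ by a matrix: define $\Phi(\bfx, \bfw) \in \bR^{k \times d}$ entrywise by $\Phi_{ij}(\bfx, \bfw) = [A[K(\cdot, \bfw)e_i](\bfx)]_j$, where $e_i$ denotes the $i$-th canonical basis vector of $\bR^k$, so that $A[K(\cdot, \bfw)c](\bfx) = \Phi(\bfx, \bfw)^\top c$. Substituting back yields $\langle A^\ast[h](\bfw), c \rangle_{\bR^k} = \bE[c^\top \Phi(\bfX, \bfw) h(\bfX)] = c^\top \bE[\Phi(\bfX, \bfw) h(\bfX)]$, and since $c$ is arbitrary I conclude $A^\ast[h](\bfw) = \bE[\Phi(\bfX, \bfw) h(\bfX)]$, which is exactly the claimed representation; membership in $\bH$ is automatic since $A^\ast[h] \in \bH$.

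The main obstacle I anticipate is the vector-valued ($k > 1$) bookkeeping: one must invoke the operator-valued reproducing kernel theory rather than the scalar one, verify that $\bfx \mapsto \Phi(\bfx, \bfw)$ is measurable (which holds because each $A[K(\cdot, \bfw)e_i]$ is an $L^2(\bX)$ element, hence measurable) so that $\Phi(\bfX, \bfw)h(\bfX)$ is a well-defined integrable random vector, and confirm that moving the constant vector $c$ in and out of the expectation is legitimate, which follows from the finiteness established above. In the scalar case ($k = d = 1$) these complications disappear and the argument collapses to a one-line combination of the reproducing property and the adjoint identity.
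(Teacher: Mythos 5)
Your proof is correct, but it takes a genuinely different route from the paper's. The paper (working with $k=1$ ``for simplicity of notation'') never invokes the reproducing kernel of $\bH$ explicitly: it observes that $h \mapsto A^\ast[h](\bfz)$ is a bounded linear functional on $L^2(\bX)$ (continuity of point evaluation on $\bH$ composed with the bounded operator $A^\ast$) and then obtains $\Phi(\cdot,\bfz)$ abstractly from the Riesz representation theorem applied in $L^2(\bX)$. You instead apply the reproducing property in $\bH$ directly to the element $A^\ast[h]$ and push the computation through the adjoint identity, which produces the kernel in closed form, $\Phi(\bfx,\bfw) = A[K(\cdot,\bfw)](\bfx)$ --- a formula the paper states separately in the remark immediately following the lemma, with essentially your one-line derivation. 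Your version is therefore constructive where the paper's is existential, and it treats the vector-valued case $k>1$ via operator-valued kernels rather than sidestepping it; the bookkeeping you flag (the entrywise definition $\Phi_{ij}(\bfx,\bfw) = [A[K(\cdot,\bfw)e_i](\bfx)]_j$, measurability of $\bfx \mapsto \Phi(\bfx,\bfw)$, and the Cauchy--Schwarz integrability bound) is exactly what is needed and is handled correctly. What the paper's route buys is marginally weaker input --- it uses only that point evaluations are continuous, without ever naming $K$ --- at the cost of not exhibiting $\Phi$, which is what one actually needs to implement the algorithm; your argument merges the lemma and the remark into a single self-contained derivation.
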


\begin{proof}
For simplicity of notation, we assume $k=1$. Notice that, by the RKHS assumption, $\varphi_\bfz:L^2(\bX) \longrightarrow \bR$ defined as $\varphi_\bfz(h) = A^\ast[h](\bfz)$, for $h \in L^2(\bX)$, is an element of the dual of $L^2(\bX)$, i.e. there existis $M_\bfz < +\infty$ such that
$$|\varphi_\bfz(h)| = |A^\ast[h](\bfz)| \leq M_\bfz \| A^\ast[h]\|_\bH \leq M_\bfw \|A^\ast\| \|h\|_{L^2(\bX)}.$$
Hence, by the Riesz Representation Theorem, there exists a kernel $\Phi(\cdot;\bfz) : \bX \longrightarrow \bR^d$ such that, for all $h \in L^2(\bX)$,
\begin{align*}
A^\ast[h](\bfz) &= \varphi_\bfz(h) = \langle \Phi(\cdot;\bfz), h \rangle_{L^2(\bX)} = \bE[\langle\Phi(\bfX, \bfz), h(\bfX)\rangle],
\end{align*}
as desired.
\end{proof}

\begin{remark}
If the RKHS $\bH$ has kernel $K$, then $\Phi$ is given by $\Phi(\bfx,\bfw) = A[K(\cdot,\bfw)](\bfx)$. Indeed, by the definition of kernel in the RKHS and the definition of $A^\ast$, we find
$$A^\ast[h](\bfz) = \langle A^\ast[h], K(\cdot,\bfz) \rangle_\bH = \langle h, A[K(\cdot,\bfz)] \rangle_{L^2(\bX)}.$$
\end{remark}

The second situation considers a different (and somewhat less restrictive) assumption for the Hilbert space $\bH$ and a particular, yet very general, class of operators $A$, called integral operators.

\begin{lemma}\label{lemma:integral_op}
If $\bH = L^2(\bW, \cB, \mu)$ taking values in $\bR^k$ and $A$ is a integral operator of the form:
$$A[f](\bfx) = \int_\bW \varphi(\bfx,\bfw) f(\bfw) d\mu(\bfw),$$
where $\varphi$ is a kernel taking values in $\bR^{d \times k}$ such that $\varphi(\bfx,\cdot)f \in L^1(\bW, \cB, \mu)$, then Assumption \ref{assumption:kernel} is verified.
\end{lemma}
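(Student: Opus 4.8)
The plan is to exhibit the kernel $\Phi$ explicitly as the pointwise transpose of $\varphi$ and then verify the defining identity of the adjoint. Since $\varphi(\bfx,\bfw) \in \bR^{d \times k}$, its transpose $\varphi(\bfx,\bfw)^\top$ lies in $\bR^{k \times d}$, which is precisely the shape required of $\Phi$ in Assumption \ref{assumption:kernel}. So I would set $\Phi(\bfx,\bfw) \triangleq \varphi(\bfx,\bfw)^\top$ and aim to show that $A^\ast[h](\bfw) = \bE[\varphi(\bfX,\bfw)^\top h(\bfX)]$ for every $h \in L^2(\bX)$, which is exactly the claimed representation $A^\ast[h](\bfw) = \bE[\Phi(\bfX,\bfw)h(\bfX)]$.

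First I would unfold $\langle A[f], h \rangle_{L^2(\bX)}$ using the definition of the $L^2(\bX)$ inner product together with the integral form of $A$, obtaining
\begin{align*}
\langle A[f], h \rangle_{L^2(\bX)} = \bE\!\left[ \int_\bW \langle \varphi(\bfX,\bfw) f(\bfw),\, h(\bfX) \rangle \, d\mu(\bfw) \right].
\end{align*}
The key algebraic step is the transpose identity $\langle \varphi(\bfX,\bfw) f(\bfw), h(\bfX) \rangle = \langle f(\bfw),\, \varphi(\bfX,\bfw)^\top h(\bfX) \rangle_{\bR^k}$, which moves $\varphi$ across the pairing and turns the $\bR^d$ inner product into an $\bR^k$ inner product against $f(\bfw)$.

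The central step, and the one I expect to be the main obstacle, is interchanging the expectation $\bE$ (the integral against $\mu_\bfX$) with the integral $\int_\bW d\mu$. I would justify this by Fubini--Tonelli on the product space $\bX \times \bW$ with measure $\mu_\bfX \otimes \mu$: applying Tonelli to the absolute value and using Cauchy--Schwarz to bound the integrand by $\|\varphi(\bfX,\bfw) f(\bfw)\|\,\|h(\bfX)\|$, the finiteness of the iterated integral is exactly what the stated hypothesis $\varphi(\bfx,\cdot) f \in L^1(\bW,\cB,\mu)$ is designed to control (the inner integral being finite pointwise in $\bfx$), combined with $h \in L^2(\bX)$. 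This is the delicate point where the integrability assumption does its work, so in the write-up I would state the integrability bound carefully before invoking the swap. Once the interchange is legitimate, I arrive at
\begin{align*}
\langle A[f], h \rangle_{L^2(\bX)} = \int_\bW \big\langle f(\bfw),\, \bE[\varphi(\bfX,\bfw)^\top h(\bfX)] \big\rangle_{\bR^k} \, d\mu(\bfw).
\end{align*}

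Finally I would read off that the map $h \mapsto \big( \bfw \mapsto \bE[\varphi(\bfX,\bfw)^\top h(\bfX)] \big)$ satisfies $\langle A[f], h \rangle_{L^2(\bX)} = \langle f, A^\ast[h] \rangle_\bH$ for all $f \in \bH$. Since $A$ is linear and bounded by item~1 of Assumption~\ref{assumption}, the adjoint exists, is bounded, and is unique, so this map must coincide with $A^\ast$; in particular $A^\ast[h] \in \bH$ is automatic and needs no separate verification. This yields $A^\ast[h](\bfw) = \bE[\Phi(\bfX,\bfw) h(\bfX)]$ with $\Phi = \varphi^\top$, establishing Assumption~\ref{assumption:kernel}.
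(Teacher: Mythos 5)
Your proposal is correct and follows essentially the same route as the paper's proof: expand $\langle A[f],h\rangle_{L^2(\bX)}$, swap $\bE$ with $\int_\bW d\mu$, transpose $\varphi$ across the inner product, and read off $\Phi = \varphi^\top$ from the uniqueness of the adjoint. The only difference is that you spell out the Fubini--Tonelli justification and the uniqueness argument, which the paper leaves implicit.
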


\begin{proof}
By the definition of the adjoint operator, we find
\begin{align*}
\langle f, A^\ast[h] \rangle_{L^2(\bW)} &= \langle A[f], h \rangle_{L^2(\bX)}  = \bE[\langle A[f](\bfX), h(\bfX)\rangle]\\
&= \bE\left[\left \langle\int_\bW \varphi(\bfX,\bfw) f(\bfw) d\mu(\bfw), h(\bfX)\right\rangle \right] \\
&= \int_\bW \bE\left[ \langle \varphi(\bfX,\bfw) f(\bfw),  h(\bfX) \rangle\right] d\mu(\bfw) \\
&= \langle f, \bE\left[ \varphi(\bfX,\cdot)^T  h(\bfX) \right] \rangle_{L^2(\bW)}.
\end{align*}
Hence, we conclude $A^\ast[h](\bfw) = \bE\left[ \varphi(\bfX,\cdot)^T   h(\bfX) \right]$, which implies Assumption \ref{assumption:kernel} with kernel $\Phi(\bfx, \bfw) = \varphi(\bfx,\bfw)^T$. 
\end{proof}

\begin{remark}
The class of integral operators delivers several of the most important linear IPs. Additionally, using Green's function formulation, some PDEs IPs could also be recast as integral equations. For instance, the recovery the initial condition of a linear PDE with known Green function and observing the solution of the PDE at some future, fixed time.
\end{remark}

Under Assumption \ref{assumption:kernel}, Lemma \ref{lemma:gradient} implies the following very useful result that is the cornerstone of our method.

\begin{corollary}\label{cor:gradient_L2}
If Assumption \ref{assumption:kernel} is verified, then the gradient of the risk function with respect to $f$ is given by 
\begin{align*}
\nabla \mathcal R_A(f)(\bfz) &= \bE[\Phi(\bfX,\bfz) \phi_f(\bfX)].
\end{align*}
\end{corollary}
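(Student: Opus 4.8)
The plan is to combine the two ingredients already in hand: the abstract gradient formula $\nabla \mathcal R_A(f) = A^\ast[\phi_f]$ from \Cref{lemma:gradient}, and the explicit kernel representation of the adjoint supplied by \Cref{assumption:kernel}. Because the corollary is essentially a substitution, the argument should be short; the only genuine content is confirming that $\phi_f$ is an admissible argument for that representation, i.e. that $\phi_f \in L^2(\bX)$.

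First I would recall from \Cref{lemma:gradient} that, under parts 1--3 of \Cref{assumption}, the gradient exists in $\bH$ and equals $A^\ast[\phi_f]$, where $\phi_f(\bfx) = \bE[\partial_2 \ell(\bfY, A[f](\bfx)) \mid \bfX = \bfx]$. The fact that \Cref{lemma:gradient} already asserts $A^\ast[\phi_f] \in \bH$ presupposes $\phi_f \in L^2(\bX)$, so this membership is inherited rather than re-proved; if one wishes to make it explicit, it follows from Jensen's inequality together with Assumption 3, which control $\bE[\|\partial_2 \ell(\bfY, A[f](\bfX))\|^2]$ and hence $\|\phi_f\|_{L^2(\bX)}^2 = \bE[\|\phi_f(\bfX)\|^2]$.

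Next I would invoke \Cref{assumption:kernel}, which guarantees a kernel $\Phi : \bX \times \bW \longrightarrow \bR^{k \times d}$ with
\[
A^\ast[h](\bfw) = \bE[\Phi(\bfX, \bfw) h(\bfX)]
\]
for every $\bfw \in \bW$ and every $h \in L^2(\bX)$. Specializing this identity pointwise in $\bfz$ to the admissible choice $h = \phi_f$ gives
\[
\nabla \mathcal R_A(f)(\bfz) = A^\ast[\phi_f](\bfz) = \bE[\Phi(\bfX, \bfz) \phi_f(\bfX)],
\]
which is precisely the claimed formula.

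The main obstacle, such as it is, lies not in the algebra but in the measurability and integrability bookkeeping: one must ensure $\phi_f$ lands in $L^2(\bX)$ so that the kernel representation applies, and, since $\phi_f$ is itself a conditional expectation, that the tower property and any implicit Fubini interchange between the outer expectation $\bE[\Phi(\bfX,\bfz)\,\cdot\,]$ and the conditioning defining $\phi_f$ are justified. Because \Cref{lemma:gradient} already delivers $A^\ast[\phi_f] \in \bH$, these points are essentially settled upstream, and the corollary follows by direct substitution.
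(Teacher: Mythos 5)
Your argument is exactly the one the paper intends: the corollary is stated as an immediate consequence of \Cref{lemma:gradient} combined with the kernel representation of $A^\ast$ in \Cref{assumption:kernel}, i.e.\ the substitution $h = \phi_f$ in $A^\ast[h](\bfw) = \bE[\Phi(\bfX,\bfw)h(\bfX)]$. Your additional remark that $\phi_f \in L^2(\bX)$ must hold for this substitution to be legitimate is a correct and worthwhile piece of bookkeeping that the paper leaves implicit.
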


Because of the results above, it is possible to construct an unbiased estimator for the gradient for the risk function for any $f$. In fact, for a given sample $(\bfx, \bfy)$ and a fixed function $f$, we define, for any $\bfz \in \bZ$,
\begin{align}\label{eq:unbiased}
u_f(\bfz;\bfx,\bfy) = \Phi(\bfx,\bfz) \partial_2 \ell(\bfy,A[f](\bfx)).  
\end{align}
Therefore, conditioning in $\bfX$, we find
\begin{align*}
\bE[u_f(\bfz; \bfX,\bfY)] &= \bE[\Phi(\bfX,\bfz) \partial_2 \ell(\bfY,A[f](\bfX))] \\
&= \bE[\bE[\Phi(\bfX,\bfz) \partial_2 \ell(\bfY,A[f](\bfX)) \ | \ \bfX]] \\
&= \bE[\Phi(\bfX,\bfz) \bE[\partial_2 \ell(\bfY,A[f](\bfX)) \ | \ \bfX]) ] \\
&= \bE[\Phi(\bfX,\bfz) \phi_f(\bfX) ] = \nabla \mathcal R_A(f)(\bfz).
\end{align*}

The main benefit is that with a single observation of $(\bfX, \bfY)$, we are able to compute an unbiased estimator for the gradient of the risk function under the true distribution. 

\color{black}

\subsection{Proposed Algorithms}\label{sec:algorithms}

Inspired by Corollary \ref{cor:gradient_L2}, we propose the following SGD algorithm for SIP problems that we called SGD-SIP: given an initial guess $f_0$, for each step $i$, we compute, following Eq. \eqref{eq:unbiased}, an unbiased estimator $u_i$ for the gradient of the loss function.  Next, we update an accumulated functional parameter by taking a stochastic gradient step in the direction of $u_i$ with step size $\alpha_i$. In the last step, we average all the accumulated gradient steps in the same spirit as \cite{polyak1992acceleration}. The choice of the step size needs to satisfy two criteria: $\sum_{i =1}^n \alpha_i$ sublinear in $n$, and $n\alpha_n \rightarrow \infty$ as $n \to +\infty$. We formally justify those desired properties in Theorem \ref{theorem:SGD_bound}. 

\begin{algorithm}
\SetKwInOut{Input}{input}
\SetKwInOut{Output}{output}
\Input{ sample $\{\bfx_i,\bfy_i\}_{i=1}^n$, operator $A$, initial guess $f_0$}
\Output{ $\hat{f}_n$}
	\caption{SGD-SIP}
	$\hat g_0 = f_0$\;
	\For{$1 \leq i \leq n$}{
	        Compute \new{$u_i(\bfz) = \Phi(\bfx_i, \bfz) \partial_2 \ell(\bfy_i,A[\hat g_{i-1}](\bfx_i))$}\;
        $\hat g_i = \hat g_{i-1} - \alpha_i  u_i$\;
    }
Set $\hat f_n = \frac{1}{n}\sum_{i=1}^n \hat g_i$\;\label{algo:SGD}
\end{algorithm}

Algorithm 1 uses only one sample at a time in order to estimate the gradient of the true risk function. In order to preserve this property, we make the number of iterations equal to the sample size; it cannot be larger. This connects with the stopping rules in iterative algorithms in IP.



Algorithm 1 has a limitation common to many approaches to Inverse Problems: one cannot hope to compute $u_i$ for every possible $\bfw_i$ and some discretization of the operator $A$ is needed, see \cite{kaipio2007statistical}. Since the SGD-SIP algorithm only computes the stochastic gradient in the points of discretization, it risks overfitting the data and provides non-smooth estimators. Next, we motivate Algorithm 2 in order to overcome the discretization problem by leveraging machine learning methods.

Consider that the space $\bZ$ was discretized in a grid of size $n_w$. In order to fully estimate the function $\hat f_n(\bfw)$ for every $\bfw \in \bZ$, we consider a hypothesis class $\mathcal H$  and, in each step, we fit a function $\hat h^\star_i \in \cH$ on the stochastic gradient $u_i$ in the discretized grid of $\bZ$. Note that in this case, $\mathcal F$ will be given by the linear span of the class $\mathcal H$. Each of these functions $h^\star_i$ can be seen as a base-learner in the same spirit of Boosting estimators, widely used in standard regression problem in the context of SIP \cite{mason1999boosting,friedman2001greedy}. Next we present our algorithm ML-SGD. 

\begin{algorithm}
\SetKwInOut{Input}{input}
\SetKwInOut{Output}{output}
\Input{ sample $\{\bfx_i,\bfy_i\}_{i=1}^n$, discretization $\{\bfz_j\}_{j=1}^{n_w}$ of $\bZ$, operator $A$, initial guess $f_0$}
\Output{ $\hat{f}_n$}
	\caption{ML-SGD}
	$\hat g_0 = f_0$\;
	\For{$1 \leq i \leq n$}{
	    \For {$1 \leq j \leq n_w$}{
	        Compute \new{$u_i(\bfz_j) = \Phi(\bfx_i, \bfz_j) \partial_2 \ell(\bfy_i,A[\hat g_{i-1}](\bfx_i))$}\;
	    }
	    $h^\star_i \in \argmin_{h \in \mathcal H} \sum_{j = 1}^{n_z}(u_i(\bfz_j)-h(\bfz_j))^2$\;
        $\hat g_i = \hat g_{i-1} - \alpha_i  h^\star_i$\;
	}
Set $\hat f_n = \frac{1}{n}\sum_{i=1}^n \hat g_i$\;\label{algo:functional_SGD}
\end{algorithm}

The goal of ML-SGD is twofold. First, it allows us to interpolate the function $h^\star_j$ to points $\bfz$ not used in the discretization grid. Second, the ML procedure smooths the noise in each gradient step calculation leading to smoother approximations that helps avoiding over-fitting. We show in \Cref{section:numerical} and in the supplementary material the benefits of such an approximation when estimating the functional parameter $f^\circ$ in both simulated and empirical examples.

\subsection{Main Result}

Our main result is a finite sample bound for the expected excess risk of Algorithm \ref{algo:SGD}. The result also extends to Algorithm \ref{algo:functional_SGD} in the case where the base learner are also unbiased estimators.

\begin{theorem}\label{theorem:SGD_bound}
Under Assumptions \ref{assumption} and \ref{assumption:kernel} and if the kernel $\Phi$ satisfies $C = \sup_{\bfx \in \bX}\|\Phi(\bfx,\cdot)\|^2 < +\infty$\footnote{This assumption is satisfied for all the examples analyzed in this paper.}, we have the following performance guarantee for Algorithm 1:
\begin{align*}
\bE\left[\cR_A(\hat f_n) - \inf_{f\in \mathcal F} \cR_A(f)\right] \leq \frac{D^2}{2n\alpha_n}+\frac{M(A,\mathcal F)}{n}\sum_{i=1}^n\alpha_i,    
\end{align*}
where $M(A,\mathcal F) = C (\bE[\|\bfY\|^2] + \|A\|^2 D^2) < \infty$.
\end{theorem}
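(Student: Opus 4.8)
The plan is to run the classical stochastic-gradient analysis for convex objectives inside the Hilbert space $\bH$, and then turn the averaged regret into an excess-risk bound via Jensen's inequality. Two structural facts carry the argument. First, since $\ell$ is convex in its second argument (Assumption~\ref{assumption}.2) and $A$ is linear (Assumption~\ref{assumption}.1), the map $f \mapsto \ell(\bfY, A[f](\bfX))$ is convex for each realization, so $\cR_A$ is convex on $\bH$; in particular the first-order inequality $\cR_A(\hat g_{i-1}) - \cR_A(f^\circ) \le \langle \nabla\cR_A(\hat g_{i-1}),\, \hat g_{i-1} - f^\circ\rangle_{\bH}$ holds, with $\nabla\cR_A$ given by \Cref{cor:gradient_L2}. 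Second, the estimator $u_i$ is conditionally unbiased for $\nabla\cR_A(\hat g_{i-1})$, exactly as shown for $u_f$ around \eqref{eq:unbiased}: writing $\mathcal{G}_{i-1}$ for the information generated by the first $i-1$ samples, $\bE[u_i \mid \mathcal{G}_{i-1}] = \nabla\cR_A(\hat g_{i-1})$.

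First I would establish the one-step recursion. Expanding the update $\hat g_i = \hat g_{i-1} - \alpha_i u_i$ gives
\begin{align*}
\|\hat g_i - f^\circ\|_{\bH}^2 = \|\hat g_{i-1} - f^\circ\|_{\bH}^2 - 2\alpha_i\langle u_i,\, \hat g_{i-1} - f^\circ\rangle_{\bH} + \alpha_i^2\|u_i\|_{\bH}^2 .
\end{align*}
Taking $\bE[\,\cdot \mid \mathcal{G}_{i-1}]$, using unbiasedness on the cross term (note $\hat g_{i-1}-f^\circ$ is $\mathcal{G}_{i-1}$-measurable), and then the convexity inequality above yields
\begin{align*}
2\alpha_i\big(\cR_A(\hat g_{i-1}) - \cR_A(f^\circ)\big) \le \|\hat g_{i-1}-f^\circ\|_{\bH}^2 - \bE\big[\|\hat g_i - f^\circ\|_{\bH}^2 \mid \mathcal{G}_{i-1}\big] + \alpha_i^2\,\bE\big[\|u_i\|_{\bH}^2 \mid \mathcal{G}_{i-1}\big] .
\end{align*}

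The quantitative input is a uniform bound on the conditional second moment of the gradient estimator. Since $\|u_i\|_{\bH} \le \|\Phi(\bfx_i,\cdot)\|\,\|\partial_2\ell(\bfy_i, A[\hat g_{i-1}](\bfx_i))\|$ and $C = \sup_{\bfx}\|\Phi(\bfx,\cdot)\|^2 < \infty$, it remains to control $\bE[\|\partial_2\ell(\bfY, A[\hat g_{i-1}](\bfX))\|^2 \mid \mathcal{G}_{i-1}]$. For the squared loss this equals $\bE[\|A[\hat g_{i-1}](\bfX) - \bfY\|^2 \mid \mathcal{G}_{i-1}]$; decomposing $A[\hat g_{i-1}](\bfX) - \bfY = A[\hat g_{i-1} - f^\circ](\bfX) - \epsilon$, the cross term vanishes by the (conditional) zero-mean of $\epsilon$, one has $\bE[\|\epsilon\|^2] \le \bE[\|\bfY\|^2]$, and $\bE[\|A[\hat g_{i-1}-f^\circ](\bfX)\|^2] \le \|A\|^2\|\hat g_{i-1}-f^\circ\|_{\bH}^2 \le \|A\|^2 D^2$ by boundedness of $A$ and Assumption~\ref{assumption}.5. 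This delivers $\bE[\|u_i\|_{\bH}^2 \mid \mathcal{G}_{i-1}] \le M(A,\mathcal{F})$ (for the logistic loss the same bound is immediate, since $\partial_2\ell$ is bounded).

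Finally I would take full expectations, abbreviate $a_i = \bE[\|\hat g_i - f^\circ\|_{\bH}^2]$, divide by $2\alpha_i$, and sum over $i=1,\dots,n$. The noise term contributes a multiple of $\sum_i\alpha_i$, while the telescoping-type term $\sum_i (a_{i-1}-a_i)/(2\alpha_i)$ is handled by Abel summation: since $1/\alpha_i$ is nondecreasing and $a_i \le D^2$, summation by parts collapses it to $D^2/(2\alpha_n)$. Dividing by $n$ and applying convexity of $\cR_A$ with Jensen's inequality, $\cR_A(\hat f_n) \le \tfrac1n\sum_i\cR_A(\hat g_i)$, then yields the stated bound (up to the harmless index shift between $\hat g_{i-1}$ and $\hat g_i$ and the explicit absolute constants), and the two step-size requirements — $n\alpha_n\to\infty$ and $\sum_i\alpha_i$ sublinear — are precisely what make each of the two terms vanish. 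I expect the main obstacle to be the second-moment bound together with the $a_i \le D^2$ estimate feeding the Abel summation: both presuppose that the unprojected iterates $\hat g_i$ stay within $\bH$-distance $D$ of $\mathcal{F}$, so one must either argue this invariant from the dynamics or read Assumption~\ref{assumption}.5 as supplying the required control on $\|\hat g_{i-1}-f^\circ\|_{\bH}$.
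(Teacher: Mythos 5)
Your proposal is correct and follows essentially the same route as the paper's proof: the standard one-step expansion of $\|\hat g_i - f^\circ\|^2$, conditional unbiasedness of $u_i$ to kill the cross term, convexity of $\cR_A$ for the first-order inequality, Abel summation of the telescoping term using $1/\alpha_i$ nondecreasing, the second-moment bound $\bE[\|u_i\|^2] \le C(\bE[\|\bfY\|^2] + \|A\|^2 D^2)$, and Jensen's inequality for the average $\hat f_n$. The obstacle you flag at the end --- that bounding $\|\hat g_{i-1}-f^\circ\|$ by $D$ presupposes the unprojected iterates stay in (or within diameter $D$ of) $\mathcal F$ --- is real and is asserted without justification in the paper's own proof as well, so your explicit acknowledgment of it is the honest reading of the argument.
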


The proof of the theorem is provided in the supplementary material. \Cref{theorem:SGD_bound} implies that if we pick the decreasing sequence $\{\alpha_i\}_{i=1}^n$ so that $n\alpha_n\rightarrow \infty$ ($\alpha_n$ cannot decrease too fast) but fast enough so that $\tfrac{1}{n}\sum_{i=1}^n \alpha_i \rightarrow 0$, then we get the convergence result. For instance, one could take $\alpha_i = \eta/\sqrt{i}$ for some fixed number $\eta$ normally taken to be in $(0,1)$. In this case, the excess risk decreases in expectation with rate $O(1/\sqrt n)$. 

\Cref{theorem:SGD_bound} also implies that the excess risk converges to zero in probability. For $\alpha_i = \eta/\sqrt{i}$ it is straightforward to check that $$\limsup_{n \to +\infty} \bP\left(\cR_A(\hat f_n) - \inf_{f\in \mathcal F} \cR_A(f) > 0\right) = 0.$$ 
Finite sample bounds with high probability can also be provided under stronger assumptions about the stochastic gradients. See for instance \cite{nemirovski2009robust}.

\section{Functional Linear Regression: numerical studies}\label{section:numerical}

In this section, we provide two applications of the Functional Linear Regression problem. We first demonstrate the performance of both algorithms in simulated data and next we provide an example for generalized linear models, applied to an classification problem using bitcoin transaction data. In \Cref{appendix:deconvolution} and \Cref{appendix:synthetic_data} we also provide an additional numerical study in a different type of Inverse Problem: The deconvolution problem.

\color{black}
As we have seen in \Cref{section:motivation}, the operator in the FLR case is given by
\begin{equation}\label{eq:opeartor_flr}
A[f](\bfx) =  \int_0^T  \bfx(s)f(s) ds.
\end{equation}
Remember that in this example we are denoting $\bfz$ by $s$. Hence, by Lemma \ref{lemma:integral_op}, we find $A^*[g](s) = \bE\left[\bfX^T(s) \, g(\bfX)\right].$
One can easily verify that $A^\ast[g] \in \bH$ by the assumption that the norm \eqref{eq:norm_sup} of $\bfX$ is finite. Therefore, we have $\Phi(\bfx,s) = \bfx^T(s)$, and we find, as in Eq. \eqref{eq:unbiased},
\begin{equation}\label{eq:grad_flr}
u_i(s) =  \bfx^T(s) \partial_2 \ell(\bfy_i, A[\hat g_{i-1}](\bfx_i)).
\end{equation}

\color{black}

\subsection{Synthetic Data}\label{sec:setup}

We will consider the simulation study presented in \cite{gonzalez2011bootstrap}. Specifically, we set $\bW = [0,1]$, $f^\circ(z) = \sin(4\pi z)$, and $\bfX$ simulated accordingly a Brownian motion in $[0,1]$. We also consider a noise-signal ratio of 0.2. We generate 100 samples of $\bfX$ and $\bfY$ with the integral defining the operator $A$ approximated by a finite sum of 1000 points in $[0,1]$. We test for the same specification when $f^\circ(z)$ oscillates between $1$ and $-1$ in the points $0.25,0.5,0.75,1$. For the observed data used in the algorithm procedure, we consider a coarser grid where each functional sample is observed at only $100$ equally-spaced times. For the ML-SGD algorithm, we used smoothing splines and regression trees as base learners. We compared the results with Penalized Functional Linear Regression (PFLR) with cubic splines and cross-validation to select the number of basis expansion; we also compare with Landweber iteration method. In order to fit the PFLR model, we used the package \textit{refund} \cite{refund} available in \textsf{R}. 
Detailed numerical results with error bars are displayed in \Cref{appendix:synthetic_data} showing that the ML-SGD algorithm is at least as competitive as a state-of-the-art tailored specifically to FLR problems and superior to Landweber iterations, a general method for Inverse Problems. In Figure \ref{fig:fitted_sint_data}, we can see that both SGD-SIP and Landweber iterations achieve similar fit performance. PFLR with cubic splines and degree of freedom of 20 achieved a similar performance than ML-SGD with splines with 10 degrees of freedom. Both essentially recovers the true function $c^\circ$ perfectly. Despite the smoothness of $f^\circ$, ML-SGD with regression trees with 30 terminal nodes was also able to approximate $f^\circ$ well. Here we focus only on the methods with best performance. ML-SGD with both regression trees and splines with 20 degrees of freedom were able to recover the true function as well as PFLR with cubic splines and 20 degrees of freedom. In the appendix, we provide bar plots with the MSE under both scenarios with error bars for different simulations. Both Landweber iterations and the SGD algorithm are noisier and seems to overfit the data. We refer the reader to \Cref{appendix:synthetic_data} for a detailed comparison among the methods.

\begin{figure}[ht]
\begin{subfigure}{.5\textwidth}
    \centering
    \includegraphics[scale = .34]{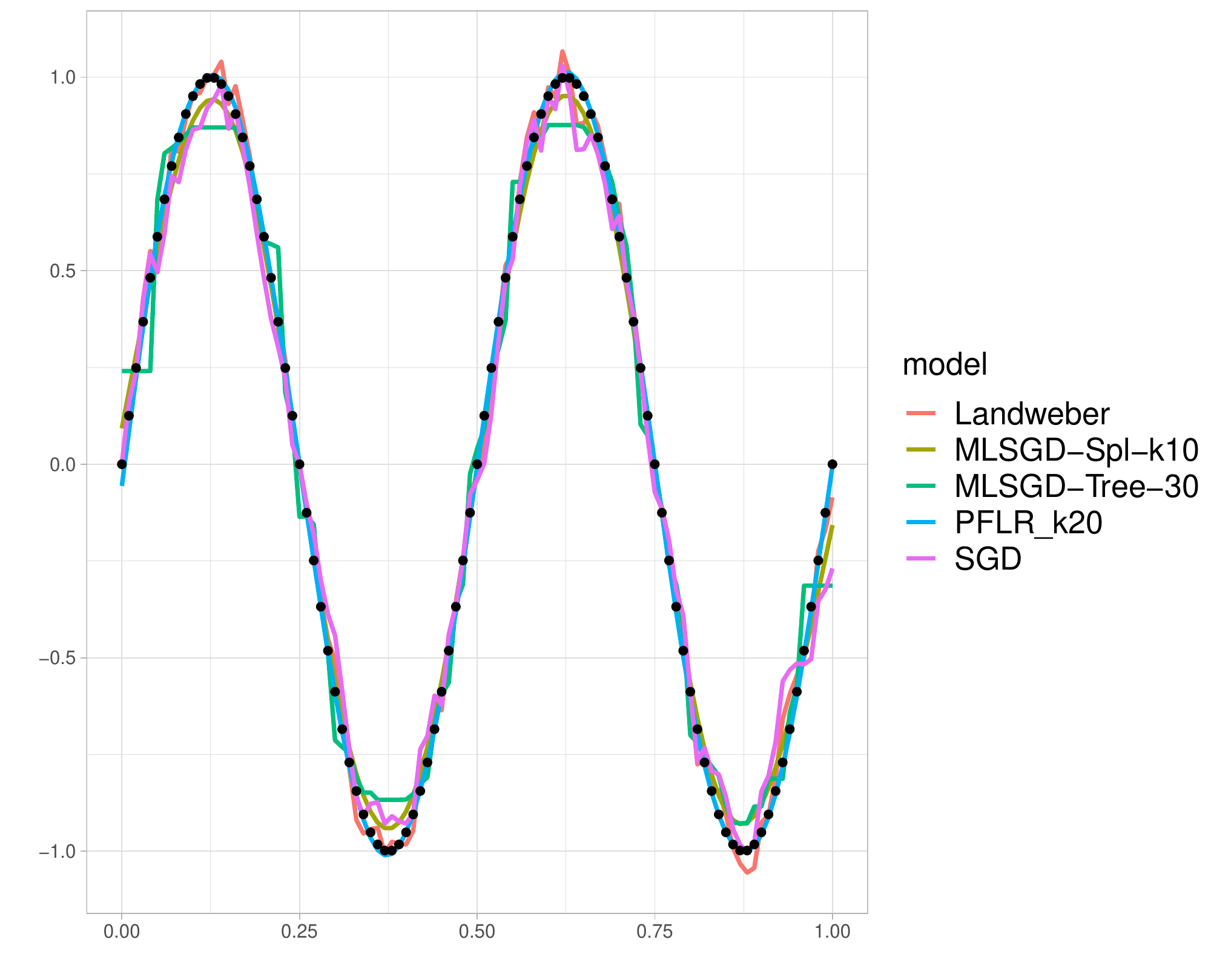}
    \caption{}
    \label{fig:fitted_sint_data_a}
\end{subfigure}
\begin{subfigure}{.5\textwidth}
    \centering
    \includegraphics[scale = .34]{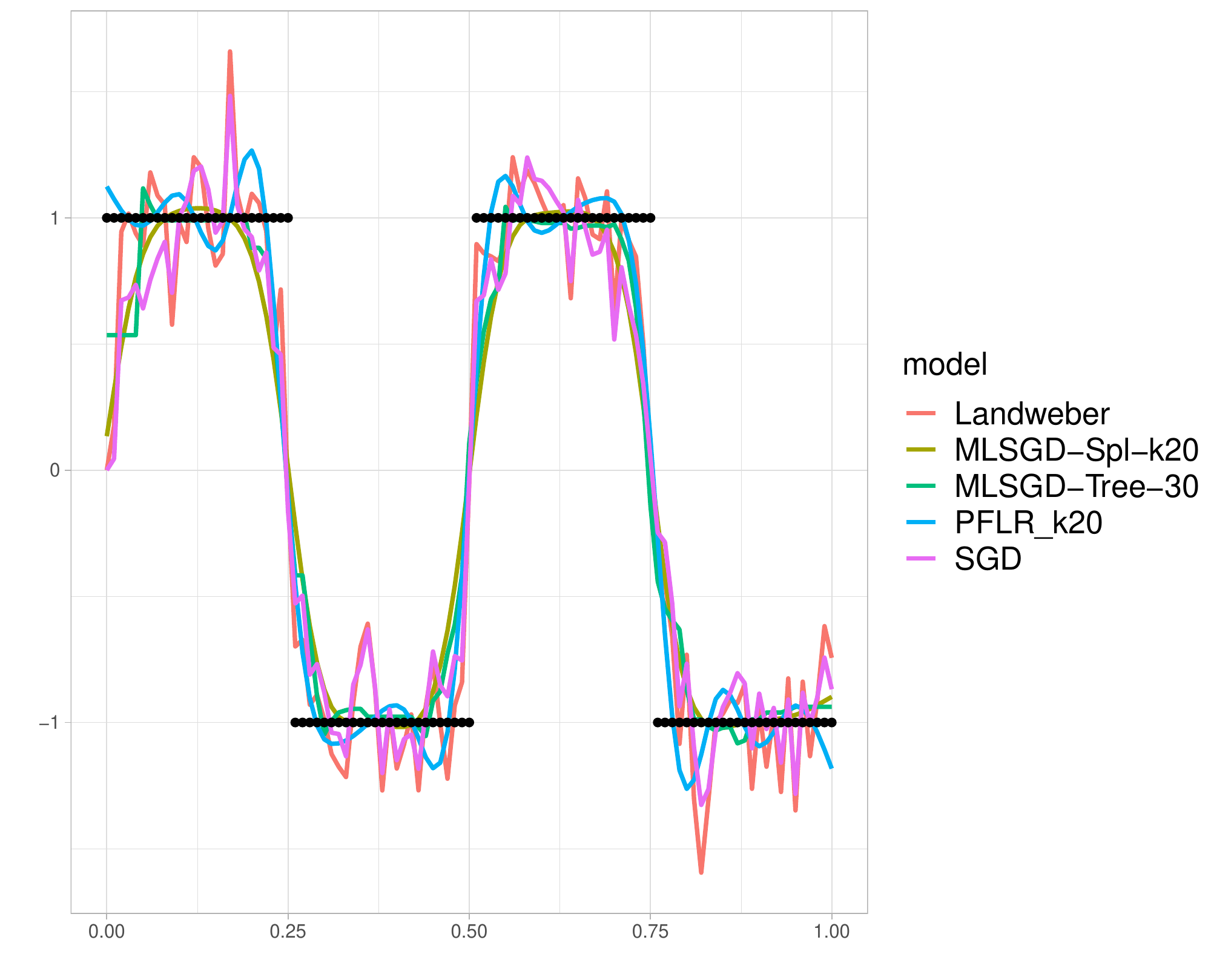}
    \caption{}
    \label{fig:fitted_sint_data_b}
\end{subfigure}
\caption{Fitted results for synthetic data. True values for $f^\circ$ are displayed as black dots.}\label{fig:fitted_sint_data}
\end{figure}



\subsection{Real Data Application}

\new{Next we consider a classification problem in the FLR setting. The data set contains 3000 bitcoin addresses spanning from April 2011 and April 2017 and their respective cumulative credit, which is described as 501 equally spaced observations for the first 3000 hours of each address, normalized in the interval $[0,1]$. For each address, we also have a label describing if the address was used for criminal activity, commonly called \textit{darknet addresses}. In Table \ref{tb:table_bitcoin_summary} we present a summary of the data. We refer the reader to \Cref{appendix:dataset} for more information about the data set used that we make available online.}

\begin{table}[ht]
\caption{Summary information for the bitcoin wallet observations.}
\centering
\begin{tabular}{rlrrr}
  \hline
 & category & obs & mean\_credit\_begin & mean\_credit\_end \\ 
  \hline
1 & Darknet Marketplace & 1512 & 234.41 & 1264.86 \\ 
  2 & Exchanges & 379 & 673.19 & 14026.14 \\ 
  3 & Gambling & 390 & 86.83 & 2369.12 \\ 
  4 & Pools & 374 & 1211.11 & 15334.20 \\ 
  5 & Services/others & 345 & 242.39 & 4094.89 \\ 
   \hline
\end{tabular}
\label{tb:table_bitcoin_summary}
\end{table}

Here we use the cumulative credit curve at each point in time as the explanatory variables $\bfX \in \bX = D([0,1])$ and $Y \in \{-1,1\}$ as the predicted outcome for the indicator variable that the category is \textit{darknet} (addresses associated with illegal activities). We propose the following model:
$
\log \frac{P(Y=1|\bfX)}{P(Y=-1|\bfX)} = \int_0^T f(s)\bfX(s)ds.
$
By using the log-likelihood of the negative binomial, one can include its gradient with respect to the functional parameter directly in Equation \eqref{eq:grad_flr} in order to use our framework. Other type of classification loss functions can also be used in the same spirit.

We compare the SGD-SIP and ML-SGD algorithm with PFLR. For the ML-SGD algorithm, we use two types of base learner, regression trees and cubic splines. The step sizes are taken to be equal of the form $O(1/\sqrt i)$, where $i = 1,\cdots, n$ is the current step of the algorithm and $n$ is the total number of steps/sample. For the PFLR algorithm, we use cubic splines with different degrees of freedom and quadratic penalty term. We highlight that those choices of splines and penalty term are widely used in the literature, see, for instance, \cite{goldsmith2011penalized}. In Table \ref{tb:cv_table} we provide 3-fold cross validation for the accuracy and kappa metrics. The SGD-SIP Algorithm achieved the best average performance in terms of accuracy. The same performance is achieved by the Functional PLR with cubic splines with number of knots equal to 30 and penalization for the derivative of the estimate. The ML-SGD algorithm with smooth splines with 30 degrees of freedom also achieved similar performance with a smoother estimator. Although the benchmark is as good as the ML-SGD algorithm, we highlight here that PFLR is tailored to Functional Data Analysis problem, while our approach is flexible for many different types of Linear SIP problems. Moreover, our algorithm can make use of only one sample at each iteration, which makes it suitable also for online applications. We refer the reader to the supplementary material for results under different choices of step size, number of knots and base functions for the PFLR model, other metrics and confusion matrices. In order to fit the PFLR mode, we used the package \textit{refund} \cite{refund} available in \textsf{R}.  

\begin{table}[ht]
\caption{Results for three fold cross-validation.}
\centering
\label{tb:cv_table}
\begin{tabular}{rrrrr}
  \hline
 & fold\_1 & fold\_2 & fold\_3 & avg\_accuracy \\ 
  \hline
ML-SGD-spline(k = 20) & 0.78 & 0.79 & 0.78 & 0.79 \\ 
ML-SGD-spline(k = 10) & 0.74 & 0.74 & 0.70 & 0.73 \\ 
ML-SGD-tree(depth = 20) & 0.79 & 0.78 & 0.77 & 0.78 \\ 
  SGD-SIP & 0.80 & 0.80 & 0.80 & 0.80 \\ 
  FPLR(k = 10) & 0.75 & 0.74 & 0.72 & 0.74 \\ 
  FPLR(k = 20) & 0.82 & 0.79 & 0.80 & 0.80 \\ 
   \hline
\end{tabular}
\end{table}

\section{Conclusion}

In this work, we provided a novel numerical method to solve SIP based on stochastic gradients with theoretical guarantees for the excess risk. Moreover, we have shown how one can improve algorithmic performance by estimating base-learners for each stochastic gradient in the same spirit as boosting algorithms. Our framework can be applied in a variety of settings ranging from deconvolution problems, Functional Data analysis in both regression and classification problems, integral equations and other linear IPs related to PDEs. We demonstrate the performance of our method with numerical studies and also with a real world application data and comparing with widely used techniques in the FLR setting. 

\newpage
\small


\appendix

\section{Empirical Application: Dataset}\label{appendix:dataset}

\begin{figure}[!ht]
    \centering
    \includegraphics[scale = .4]{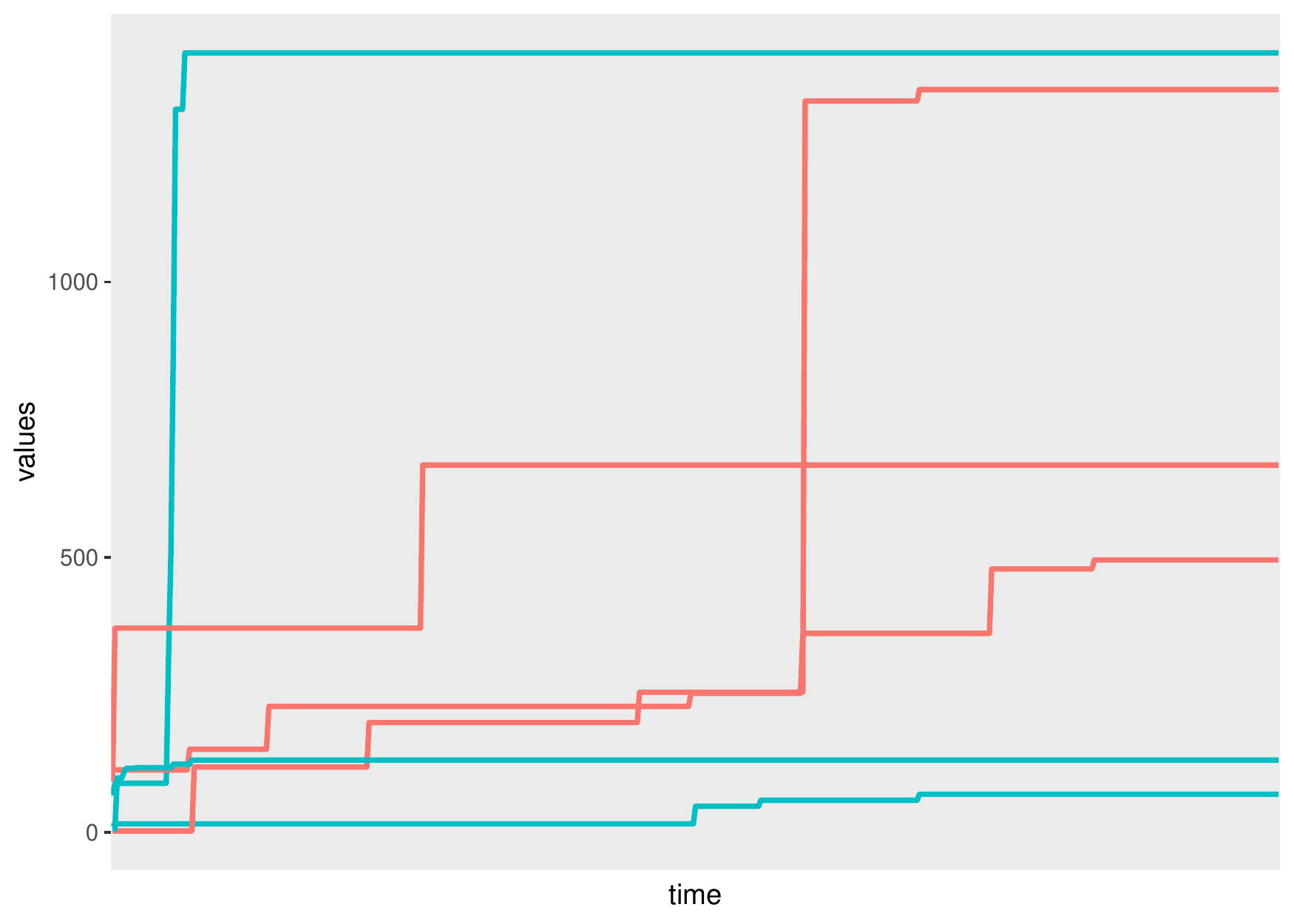}
    \caption{Example of cumulative credits for six different addresses across 501 data points. In red, addresses associated with criminal activity, in blue, addresses associated with noncriminal activities.}
    \label{fig:bitcoin_data}
\end{figure}


\section{Functional Gradient for the Deconvolution Problem}\label{appendix:deconvolution}

Remember that the operator $A$ is given by
\begin{equation}\label{eq:conv_model}
A[f](\bfx) =  \int_{\bZ} k(\bfx - \bfz)f(\bfz) d\mu(\bfz).
\end{equation}
Hence,
\begin{align*}
\langle A[f], g \rangle_{L^2(\bX)} &= \bE[ A[f](\bfX) g(\bfX) ] \\
&= \bE\left[ \left(\int_{\bZ} k(\bfX - \bfz)f(\bfz) d\mu(\bfz) \right) g(\bfX) \right]\\
&= \int_\bZ \bE[ k(\bfX - \bfz) g(\bfX)] f(\bfz)  d\mu(\bfz) \\
&= \langle f, A^*[g]\rangle_{L^2(\bZ)},
\end{align*}
where
$$A^*[g](\bfz) = \bE[ k(\bfX - \bfz) g(\bfX)].$$
Therefore, we have $\Phi(\bfx,\bfz) = k(\bfx - \bfz)$, and we find, as in Eq. \eqref{eq:unbiased},
$$ u_i(\bfz)  =  k(\bfx_i - \bfz) \partial_2 \ell (\bfy_i, A[\hat g_{i-1}](\bfx_i)).$$

We highlight here the need to use each observation only once in order to compute the stochastic gradient so we can have precisely $n$ steps for the SGD-SIP/ML-SGD algorithm. In this case, the samples can be used to provide unbiased estimators for the gradient of the risk function under the populational distribution.

\section{Numerical Studies: Synthetic Data}\label{appendix:synthetic_data}

In this section we present the numerical studies of our proposed algorithms with standard benchmarks from the literature. We studied both the Functional Linear Regression problem and the Deconvolution problem. We remind the reader that the same framework can also be used to solve different types of inverse problems under a statistical framework, such as ODEs and PDEs.

\subsection{Functional Linear Regression}

Recall \Cref{section:numerical} where for the FLR problem our goal is to recover $f^\circ$ when we have access to observations of the form
$$
\bfY = A[f^\circ](\bfX) + \epsilon,
$$
where the operator $A$ is given by
\begin{equation}
A[f](\bfx) =  \int_0^T f(s) \bfx(s)ds.
\end{equation}

Recall the data generating process described in \ref{sec:setup}. We set $\bW = [0,1]$, $f^\circ(w) = \sin(4\pi w)$, and $\bfX$ simulated accordingly a Brownian motion in $[0,1]$. We also consider a noise-signal ratio of 0.2. Next, we study also the case where $f^\circ$ oscillates between $1,-1$ in the points $\bfw = 0.25,0.5,0.75,1$. We generate 3000 samples of $\bfX$ and $\bfY$ with the integral defining the operator $A$ approximated by a finite sum of 1000 points in $[0,1]$. For the observed data used in the algorithm procedure, we consider a coarser grid where and each functional sample is observed at only $100$ equally-spaced times. For the ML-SGD algorithm, we used smoothing splines as base learners. We compare our algorithm with the Landweber method, which is a Gradient Descent version for deterministic Inverse Problems and Functional Penalized Linear Regression (FPLR). For the ML-SGD, SGD and Landweber method, the step sizes were taken fixed to be $O(1/\sqrt{N})$ (which satisfy the requirements discussed after \ref{theorem:SGD_bound}). We simulate the data generating process 10 times in order to compute the metrics performance. We compare the methods in terms of Mean Square Error of the recovered function $f^\circ$. 



In \Cref{fig:sint_data_1} we present the Mean Squared Error and with Error Bars representing 2 standard deviations. In this case, we can see that PFLR with different specifications out-perform our propposed algorithms, which achieves similar performance as Landweber iterations. It is important to note here, that while PFLR methods are tailored for this type of problems, ours, as well as Landweber iterations, are not. Nevertheless, we can see in \Cref{fig:fitted_sint_data_a} that essentially all the algorithms are capable of recovering the true underlying function $f^\circ$. 

In \Cref{fig:sint_data_2} we have a similar setup in a harder problem, where the underlying $f^\circ$ is not as smooth as before. In this case, the advantage of the PFLR reduces and the performance of all the methods are very similar. It is important to note that our approach makes use of only one sample at each iteration of our proposed algorithms. One can improve the stability and convergence of the estimated algorithms by simply using more samples at each time. In case one uses all the samples in each iteration (such as what is commonly done in Landweber iteration or boosting procedures in standard regression problems), \Cref{theorem:SGD_bound} cannot be applied directly but empirically the methods perform well. We illustrate this approach in  \Cref{fig:sint_data_3}, where we make use of all the samples in every iteration of our algorithms.


\begin{figure}[H]
    \centering
    \includegraphics[scale = .35]{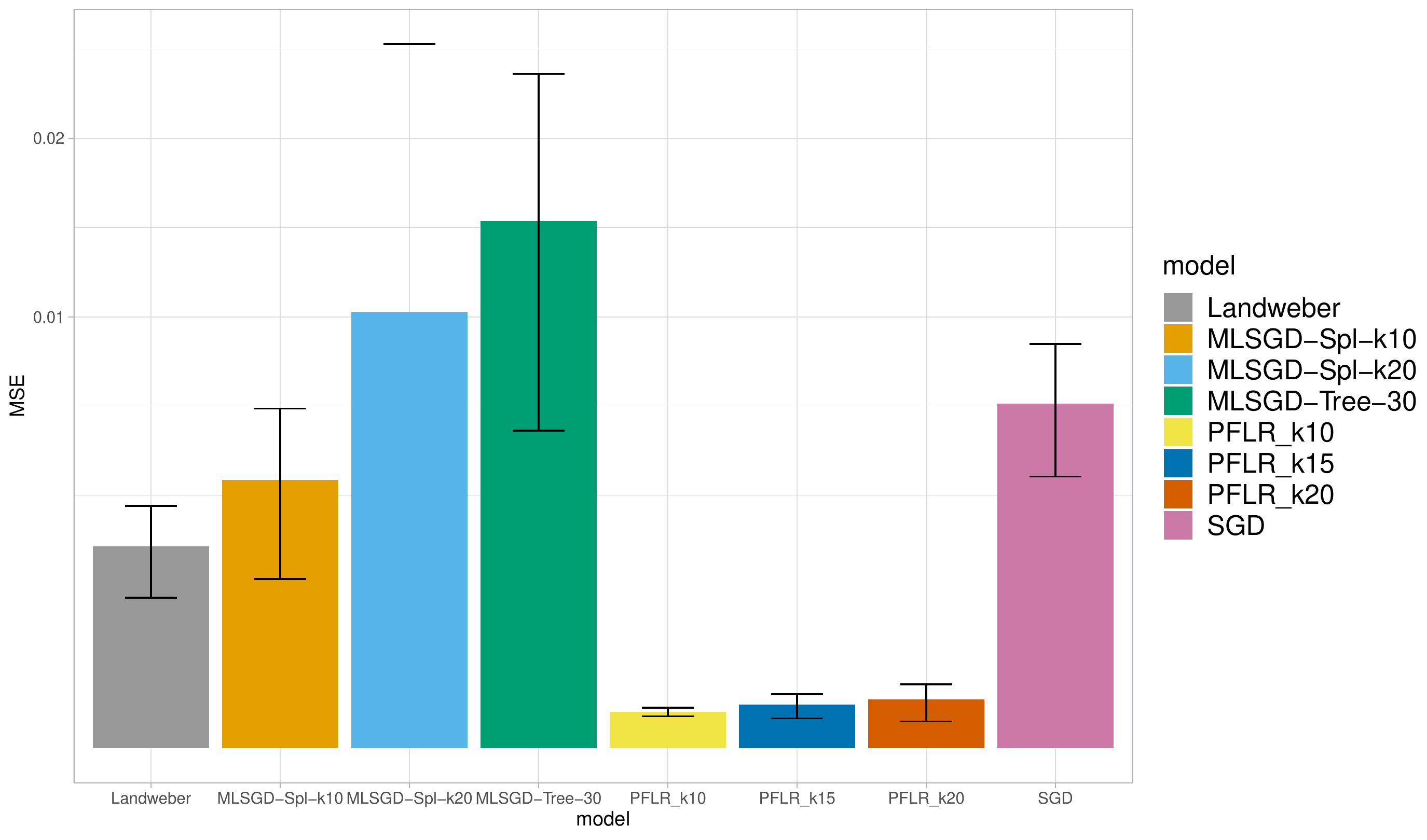}
    \caption{MSE with 2 standard deviations error bars for 10 simulations with $f$ as the sine function. Y-axis in square-root scale.}
    \label{fig:sint_data_1}
\end{figure}

\begin{figure}[H]
    \centering
    \includegraphics[scale = .35]{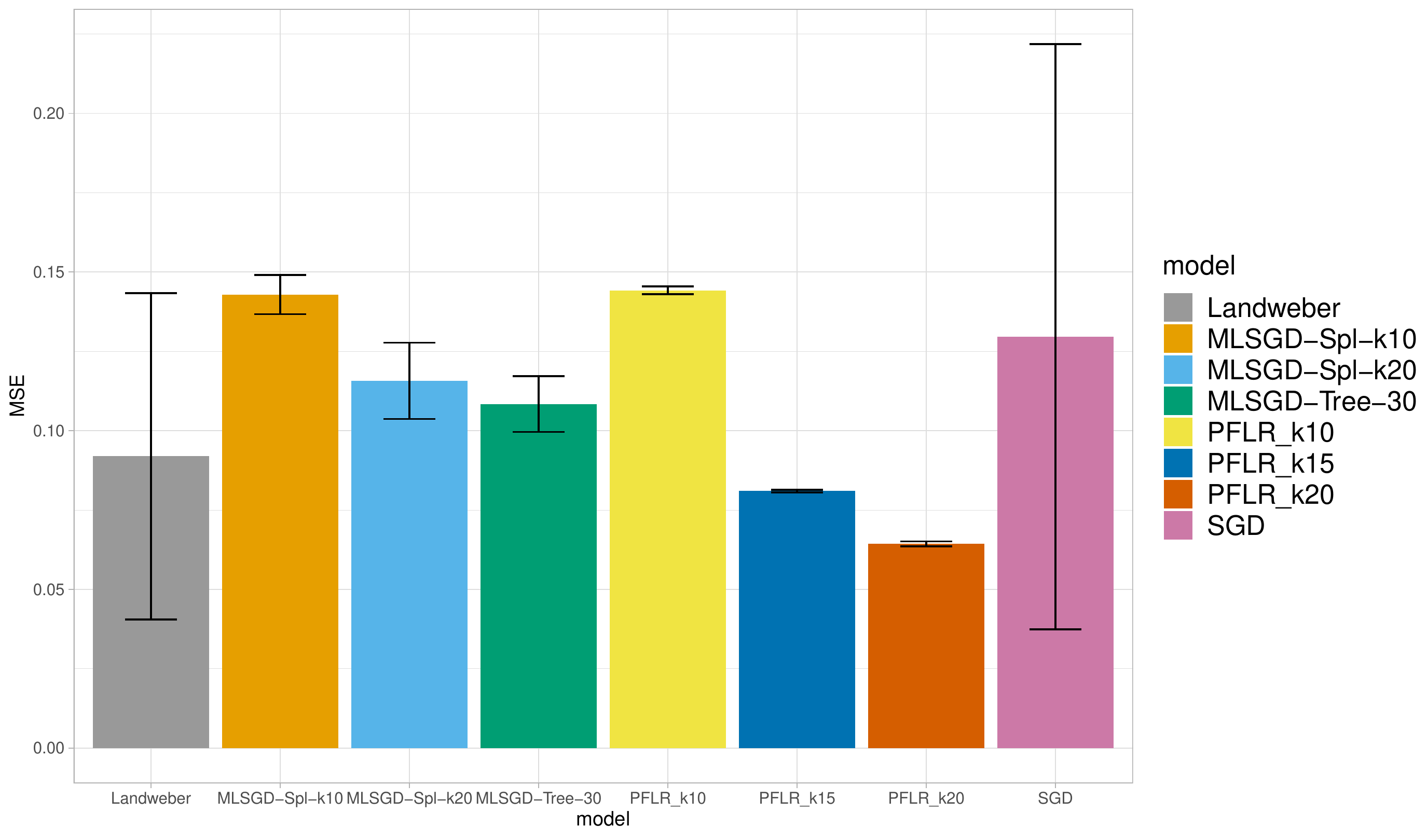}
    \caption{MSE with 2 standard deviations error bars for 10 simulations with $f$ as step function.}
    \label{fig:sint_data_2}
\end{figure}

\begin{figure}[H]
    \centering
    \includegraphics[scale = .35]{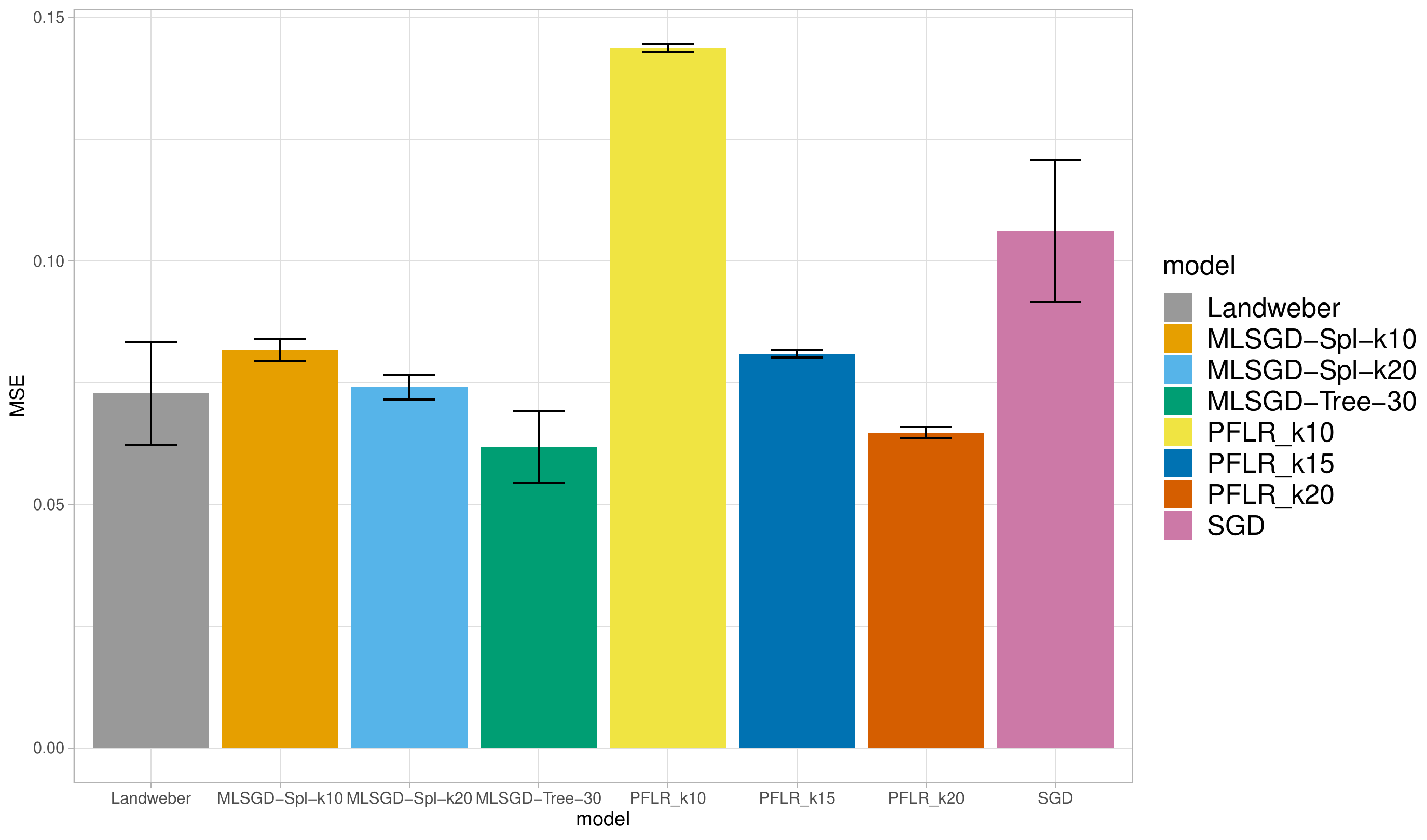}
    \caption{MSE with 2 standard deviations error bars for 10 simulations with $f$ as step function and using all samples for the gradient computation.}
    \label{fig:sint_data_3}
\end{figure}

\subsection{Deconvolution}

For the deconvolution problem we examine the following numerical exercise. We take two choices of functional parameters for Eq. \eqref{eq:conv_model}, as a peak function:
\begin{equation}\label{eq:cases}
    f(w) = e^{-w^2}.
\end{equation}
We consider the kernel to be given by
$$
k(z) = 1_{\{z \geq 0\}}
$$
and the following parameters for the data generating process. First we discretize the space $\mathbb W = [-10,10]$ with increments $h = 0.01$. We use the same for the space $\mathbb X = [-10,10]$. Next, we use the discretized space to generate the true values $A[f]$ where we approximate the integral by a finite sum. The second step is to generate the random observations. For that, we consider a coarser grid for $\mathbb X$, with grid $h_{obs} = 0.1$, i.e. 10 times less information than the simulation used to generate the true observations. This reproduces the fact that in practice one cannot hope to observe the functional data over all points. Moreover, when computing the operator $A$ in our algorithm, we again consider a coarser grid for $\mathbb W$, with grid $h_{obs} = 0.1$. We then add iid noise terms $N(0,2)$ to the observations $A[f]$ collected from the coarse grid. For the ML-SGD algorithm (Algorithm 2), we used smooth splines with 5 degrees of freedom as $\cH$ in order to estimate the stochastic gradients. We compare our algorithms with the well-known landweber iteration, which resambles the standard Gradient Descent algorithm when ignoring noise and using all the samples available in all the iterations. We start with $f_0(z) = 0$ in all the algorithms. 

In \Cref{fig:sint_deconv_fit} we can see that ML-SGD outputs a smooth estimator for the functional parameter $f^\circ$ while the other two methods tends to overfit the data. Nevertheless, this apprently instability seems to allow both the SGD-SIP and Landweber to better estimate the function in the peak, which compensate in the Mean Square Error estimator despite the increase in the volatility of the estimator. In \Cref{fig:sint_deconv_mse} we present the Mean Squared Errors and Error Bars with two standard deviations.

\begin{figure}[!ht]
\begin{subfigure}{.5\textwidth}
    \centering
    \includegraphics[scale = .32]{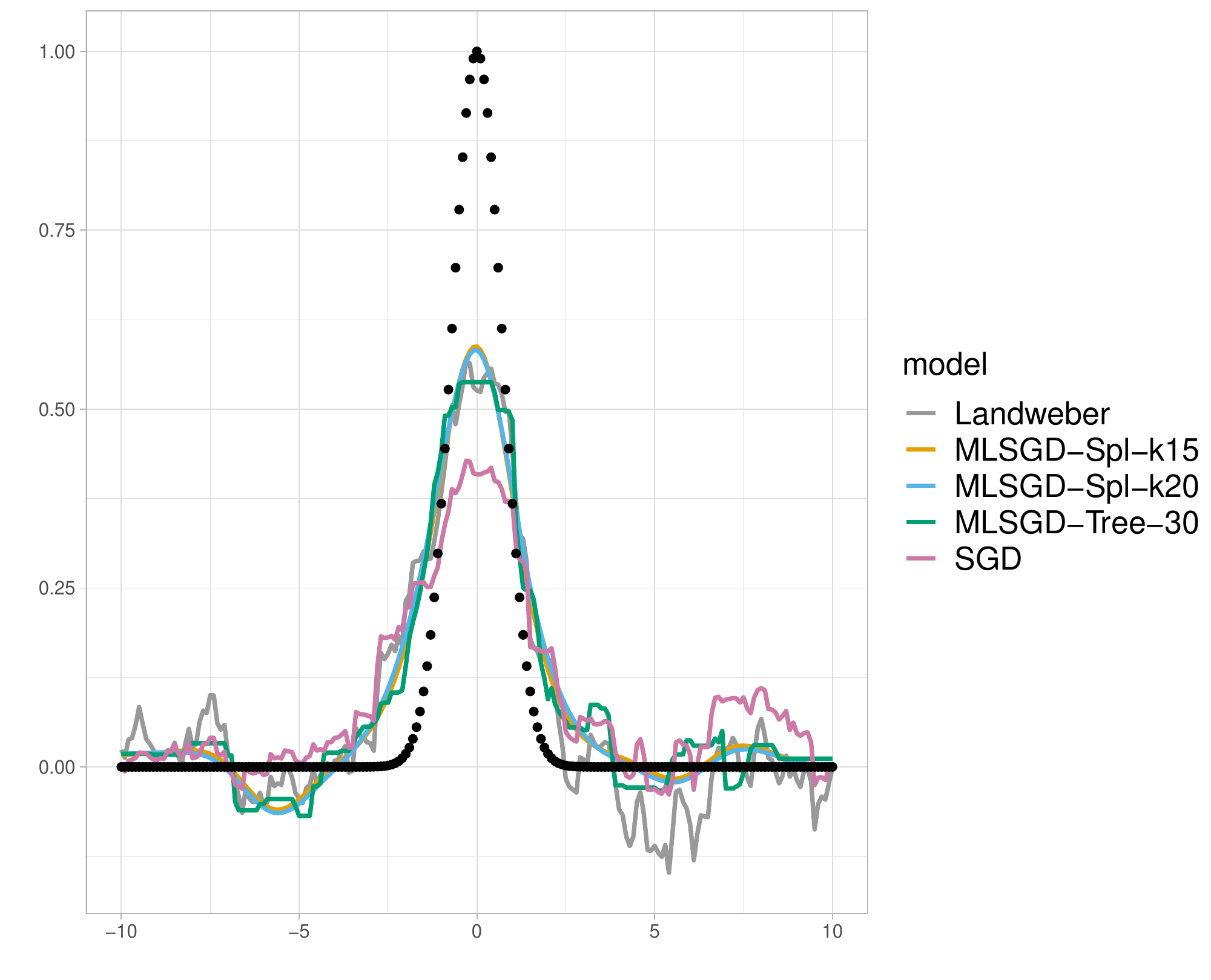}
    \caption{}
    \label{fig:sint_deconv_fit}
\end{subfigure}
\begin{subfigure}{.5\textwidth}
    \centering
    \includegraphics[scale = .32]{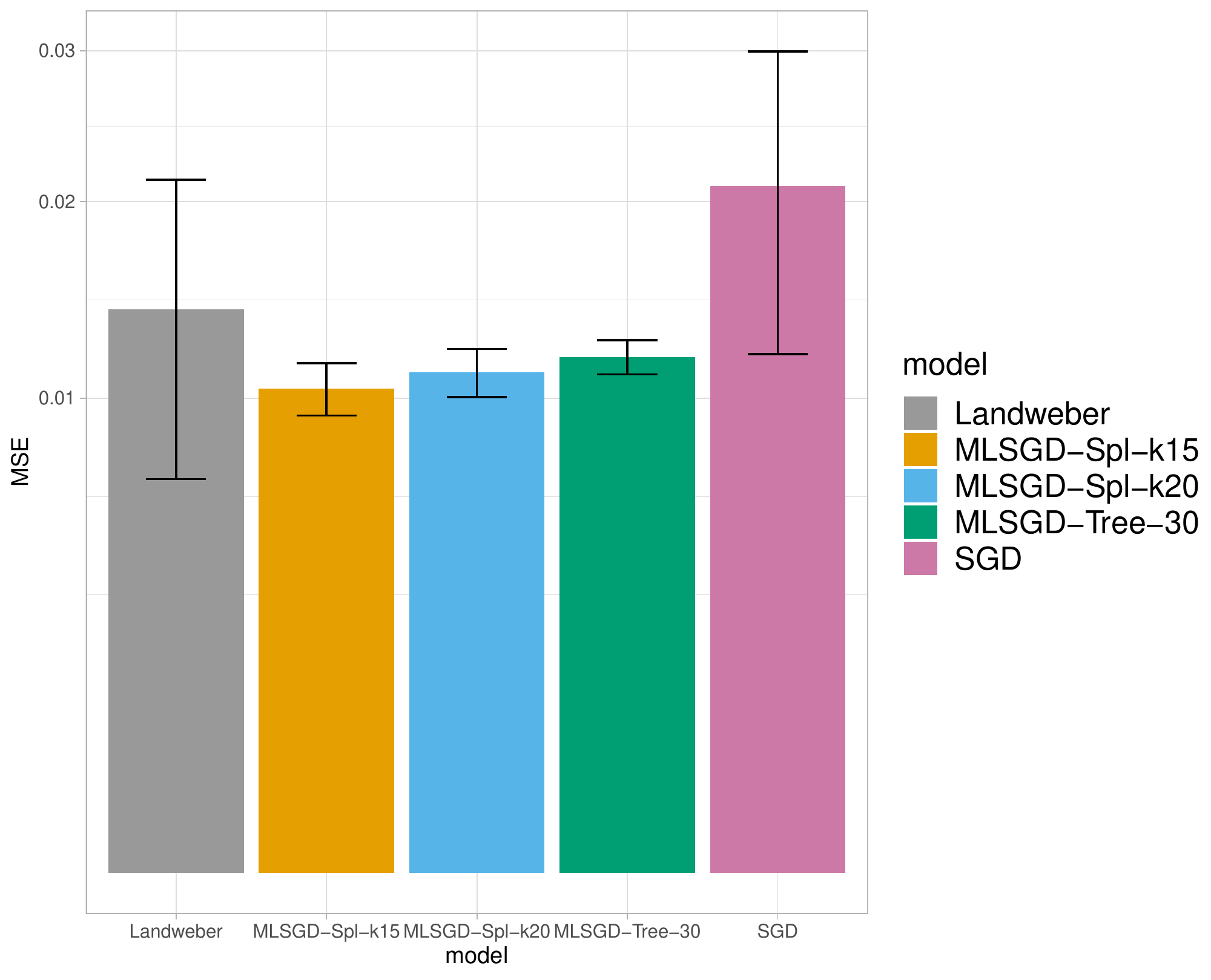}
    \caption{}
    \label{fig:sint_deconv_mse}
\end{subfigure}
\label{fig:deconvolution}
\caption{Numerical results for the deconvolution problem. In (a) we have an example of the fitted functions for one simulation. In (b) we have the MSE with error bars representing two standard-deviations.}
\end{figure}

\section{Proof of Theorem \ref{theorem:SGD_bound}}\label{appendix:proof}

\begin{proof}

First, it is straightforward to check that $\mathcal R_{A}$ is convex in $\cF$: if $f, g \in \cF$ and $\lambda \in [0,1]$, then
\begin{align*}
\cR_A(\lambda f + (1-\lambda)g) &= \bE[\ell(\bfY,A[\lambda f + (1-\lambda)g](\bfX))] \\
&= \bE[\ell(\bfY,\lambda A[f](\bfX) + (1-\lambda)A[g](\bfX))]\\
&\leq \bE[\lambda \ell(\bfY, A[f](\bfX))] + \bE[(1-\lambda) \ell(\bfY,A[g](\bfX))] \\
&= \lambda \cR_A(f) + (1-\lambda)\cR_A(g).
\end{align*}

For simplicity of notation we will denote the norm and inner product in $L^2(\bZ)$ by $\|\cdot\|$ and $\langle \cdot,\cdot \rangle$. 

By the Algorithm 1 procedure, we have that
\begin{align*}
&\frac{1}{2}\|\hat g_i-f^\circ\|^2 = \frac{1}{2}\|\hat g_{i-1}-\alpha_i u_i- f^\circ\|^2 \\
&= \frac{1}{2}\|\hat g_{i-1} - f^\circ\|^2 -\alpha_i \langle u_i, \hat g_{i-1}-f^\circ \rangle +\frac{\alpha_i^2}{2}\|u_i\|^2 \\
&= \frac{1}{2}\|\hat g_{i-1} - f^\circ\|^2 -\alpha_i \langle u_i - \nabla \cR_A(\hat g_{i-1}), \hat g_{i-1}-f^\circ \rangle +\frac{\alpha_i^2}{2}\|u_i\|^2 - \alpha_i \langle \nabla \cR_A(\hat g_{i-1}), \hat g_{i-1}-f^\circ \rangle \\
&\leq \frac{1}{2}\|\hat g_{i-1} - f^\circ\|^2 -\alpha_i \langle u_i - \nabla \cR_A(\hat g_{i-1}), \hat g_{i-1}-f^\circ \rangle +\frac{\alpha_i^2}{2}\|u_i\|^2 - \alpha_i(\mathcal \cR_A(\hat g_{i-1}) - \cR_A(f^\circ)), 
\end{align*}
where the last inequality follows from convexity of the loss function (Assumption 2). Rearranging terms we get
$$
\cR_A(\hat g_{i-1})-\cR_A(f^\circ) \leq \frac{1}{2\alpha_i}\left(\|\hat g_{i-1}-f^\circ\|^2-\|\hat g_i-f^\circ\|\right)+\frac{\alpha_i}{2}\|u_i\|^2 - \langle u_i - \nabla \cR_A(\hat g_{i-1}), \hat g_{i-1}-f^\circ \rangle.
$$
Summing over $i$ leads to 
\begin{align*}
\sum_{i=1}^n \cR_A(\hat g_{i-1})-\cR_A(f^\circ) &\leq \sum_{i=1}^n  \frac{1}{2\alpha_i}\left(\|\hat g_{i-1}-f^\circ\|^2-\|\hat g_i-f^\circ\|^2\right) \\
&+\sum_{i=1}^n  \frac{\alpha_i}{2}\|u_i\|^2 \\
&-\sum_{i=1}^n  \langle u_i -  \nabla \cR_A(\hat g_{i-1}), \hat g_{i-1}-f^\circ \rangle.
\end{align*}

For the first term, by Assumption 5, we find
\begin{align*}
\sum_{i=1}^n \frac{1}{2\alpha_i}\left(\|\hat g_{i-1}-f^\circ\|^2-\|\hat g_i-f^\circ\|^2\right) &=  \sum_{i=2}^n \left(\frac{1}{2\alpha_i}-\frac{1}{2\alpha_{i-1}}\right)\|\hat g_{i-1}-f^\circ\|^2 \\
&+ \frac{1}{2\alpha_1}\|\hat g_0-f^\circ\|^2 - \frac{1}{2\alpha_n}\|\hat g_n-f^\circ\|^2\\
&\leq \sum_{i=2}^n\left(\frac{1}{2\alpha_i}-\frac{1}{2\alpha_{i-1}}\right)D^2 + \frac{1}{2\alpha_1}D^2=\frac{D^2}{2\alpha_n},
\end{align*}
since $\hat g_i \in \cF$ for all $i=1,\ldots,n$.

To bound the second term, notice that\footnote{In the computations below we use the fact that the point-to-point loss function ordinarily has Lipschitz gradients which implies at most linear growth. The two examples analyzed in this paper trivially satisfies this bound.}
\begin{align*}
\|u_i\|^2 &= \|\Phi(\bfx_i,\cdot)\partial_2 \ell (\bfy_i, A[\hat g_{i-1}](\bfx_i))\|^2 \leq \|\Phi(\bfx_i,\cdot)\|^2  \|\partial_2 \ell (\bfy_i, A[\hat g_{i-1}](\bfx_i))\|^2\\
&\leq 2\tilde{C} \|\Phi(\bfx_i,\cdot)\|^2 \cdot (\|\bfy_i\|^2 + \|A[\hat g_{i-1}](\bfx_i)\|^2).
\end{align*}
Hence, if we take $C = \sup_{\bfx \in \bX}\|\Phi(\bfx,\cdot)\|^2 < +\infty$, we find\footnote{Abusing the notation and defining $C$ as $C \, \tilde{C}$.}
\begin{align*}
\bE[\|u_i\|^2] &\leq 2C \bE[(\|\bfY\|^2 + \|A[\hat g_{i-1}](\bfX)\|^2)] = 2C (\bE[\|\bfY\|^2] + \|A[\hat g_{i-1}\|^2_{L^2(\bX)})\\
&\leq 2C (\bE[\|\bfY\|^2] + \|A\|^2 \|\hat g_{i-1}\|^2_{L^2(\bX)}) \leq 2C (\bE[\|\bfY\|^2] + \|A\|^2 D^2).
\end{align*}

Finally, for the third term, note that, after taking expectation, the tower property and the fact that $u_i$ is an unbiased estimator of the gradient of $\cR_A$ (see Eq. \eqref{eq:unbiased}) give that
\color{black}
\begin{align*}
\bE[\langle u_i -  \nabla \cR_A(\hat g_{i-1}), \hat g_{i-1}-f^\circ \rangle] &= \bE[\bE[\langle u_i -  \nabla \cR_A(\hat g_{i-1}), \hat g_{i-1}-f^\circ \rangle \mid \mathcal{D}_{i-1}]]\\
&= \bE[\langle \bE[u_i -  \nabla \cR_A(\hat g_{i-1}) \mid \mathcal{D}_{i-1}], \bE[\hat g_{i-1}-f^\circ \mid \mathcal{D}_{i-1}] \rangle]\\
&= \bE[\langle \bE[u_i\mid \mathcal{D}_{i-1}] -  \nabla \cR_A(\hat g_{i-1}), \hat g_{i-1}-f^\circ \rangle] = 0.
\end{align*}
where $\mathcal{D}_{i-1}$ denotes the $\sigma$-algebra generated by the data $\{\bfx_k,\bfy_k\}_{k=1}^{i-1}$.
\color{black}
Again, by convexity of the risk function, $\cR_A(\hat f_n) \leq \frac{1}{n}\sum_{i=1}^n \cR_A(\hat g_i)$. Therefore,
$$
\bE\left[\cR_A(\hat f_n) - \cR_A(f^\circ)\right] \leq \frac{D^2}{2n\alpha_n}+\frac{1}{2n}\sum_{i=1}^n\alpha_i \bE[\|u_i\|^2] \leq \frac{D^2}{2n\alpha_n}+\frac{C (\bE[|\bfY|^2] + \|A\|^2 D^2)}{n}\sum_{i=1}^n\alpha_i,
$$
and the theorem is proved.
\end{proof}


\end{document}